\documentclass[acmtog,authorversion]{acmart}
\acmSubmissionID{357}

\usepackage{booktabs} 
\usepackage[printonlyused,smaller,withpage]{acronym}
\usepackage{amsfonts}
\usepackage{amsmath}
\usepackage{amssymb}
\usepackage{gensymb}
\usepackage{bm}
\usepackage{graphicx}
\usepackage{subcaption}

\citestyle{acmauthoryear}

\definecolor{revisionColor}{rgb}{.8,.3,.3}
\newcommand{\revised}[1]{#1}

\usepackage[ruled]{algorithm2e} 

\SetAlFnt{\small}
\SetAlCapFnt{\small}
\SetAlCapNameFnt{\small}
\SetAlCapHSkip{0pt}

\acmJournal{TOG}
\acmYear{2020}
\acmVolume{39}
\acmNumber{4}
\acmArticle{92}
\acmMonth{7}
\acmDOI{10.1145/3386569.3392437}

\setcopyright{acmlicensed}


\received{January 2020}
\received[revised]{April 2020}
\received[accepted]{April 2020}

\newcommand{\eg}{\textit{e.g.}}
\newcommand{\ie}{\textit{i.e.}}
\newcommand{\ii}{\mathsf{i}\mkern1mu}

\begin{document}

\title{CNNs on Surfaces using Rotation-Equivariant Features}

\author{Ruben Wiersma}
\orcid{0000-0001-7900-7253}
\email{r.t.wiersma@tudelft.nl}
\author{Elmar Eisemann}
\orcid{0000-0003-4153-065X}
\email{e.eisemann@tudelft.nl}
\author{Klaus Hildebrandt}
\email{k.a.hildebrandt@tudelft.nl}
\affiliation{%
 \institution{Delft University of Technology}}

\renewcommand\shortauthors{R. Wiersma, E. Eisemann, \& K. Hildebrandt}

\begin{abstract}
\revised{
This paper is concerned with a fundamental problem in geometric deep learning that arises in the construction of convolutional neural networks on surfaces. Due to curvature, the transport of filter kernels on surfaces results in a rotational ambiguity, which prevents a uniform alignment of these kernels on the surface. We propose a network architecture for surfaces that consists of vector-valued, rotation-equivariant features. The equivariance property makes it possible to locally align features, which were computed in arbitrary coordinate systems, when aggregating features in a convolution layer. The resulting network is agnostic to the choices of coordinate systems for the tangent spaces on the surface. We implement our approach for triangle meshes. Based on circular harmonic functions, we introduce convolution filters for meshes that are rotation-equivariant at the discrete level. We evaluate the resulting networks on shape correspondence and shape classifications tasks and compare their performance to other approaches.}
\end{abstract}

%
%
\begin{CCSXML}
<ccs2012>
<concept>
<concept_id>10010147.10010257.10010293.10010294</concept_id>
<concept_desc>Computing methodologies~Neural networks</concept_desc>
<concept_significance>500</concept_significance>
</concept>
<concept>
<concept_id>10010147.10010371.10010396.10010402</concept_id>
<concept_desc>Computing methodologies~Shape analysis</concept_desc>
<concept_significance>500</concept_significance>
</concept>
</ccs2012>
\end{CCSXML}

\ccsdesc[500]{Computing methodologies~Neural networks}
\ccsdesc[500]{Computing methodologies~Shape analysis}

%
%

\keywords{Geometric Deep Learning, CNNs on Surfaces, Surface Networks, Rotation-Equivariance, Circular Harmonic Filters, Shape Classification, Shape Segmentation, Shape Correspondence}

\maketitle
\section{Introduction}
The success of Deep Learning approaches based on convolutional neural networks (CNNs) in computer vision and image processing has motivated the development of analogous approaches for the analysis, processing, and synthesis of surfaces. Along these lines, approaches have been proposed for problems such as shape recognition \cite{Su2015}, shape matching \cite{boscaini2016learning}, shape segmentation \cite{maron2017convolutional}, shape completion \cite{litany2017deformable}, curvature estimation \cite{Guerrero2018}, and 3D-face synthesis \cite{ranjan2018generating}. 

In contrast to images, which are described by regular grids in a Euclidean domain, surfaces are curved manifolds and the grids on these surfaces are irregular. In order to still use regular grids, one can work with multiple projections of the surface on planes \cite{Su2015} or with volumetric grids \cite{wu20153d}. 

An alternative to learning on regular grids is generalized deep learning, often referred to as geometric deep learning \cite{bronstein2017geometric}, which targets irregularly sampled manifolds and graphs. A central element of such geometric CNNs is a generalized convolution operator. For CNNs on images, the convolution layers are built from convolution kernels, which are transported across the image. As a result, the parameters that define one kernel describe the convolution across the whole image, which significantly reduces the number of parameters that need to be learned. This is a motivation for exploring constructions of generalized convolution operators on surfaces based on convolution kernels. 

\begin{figure}
    \centering
    \includegraphics[width=\columnwidth]{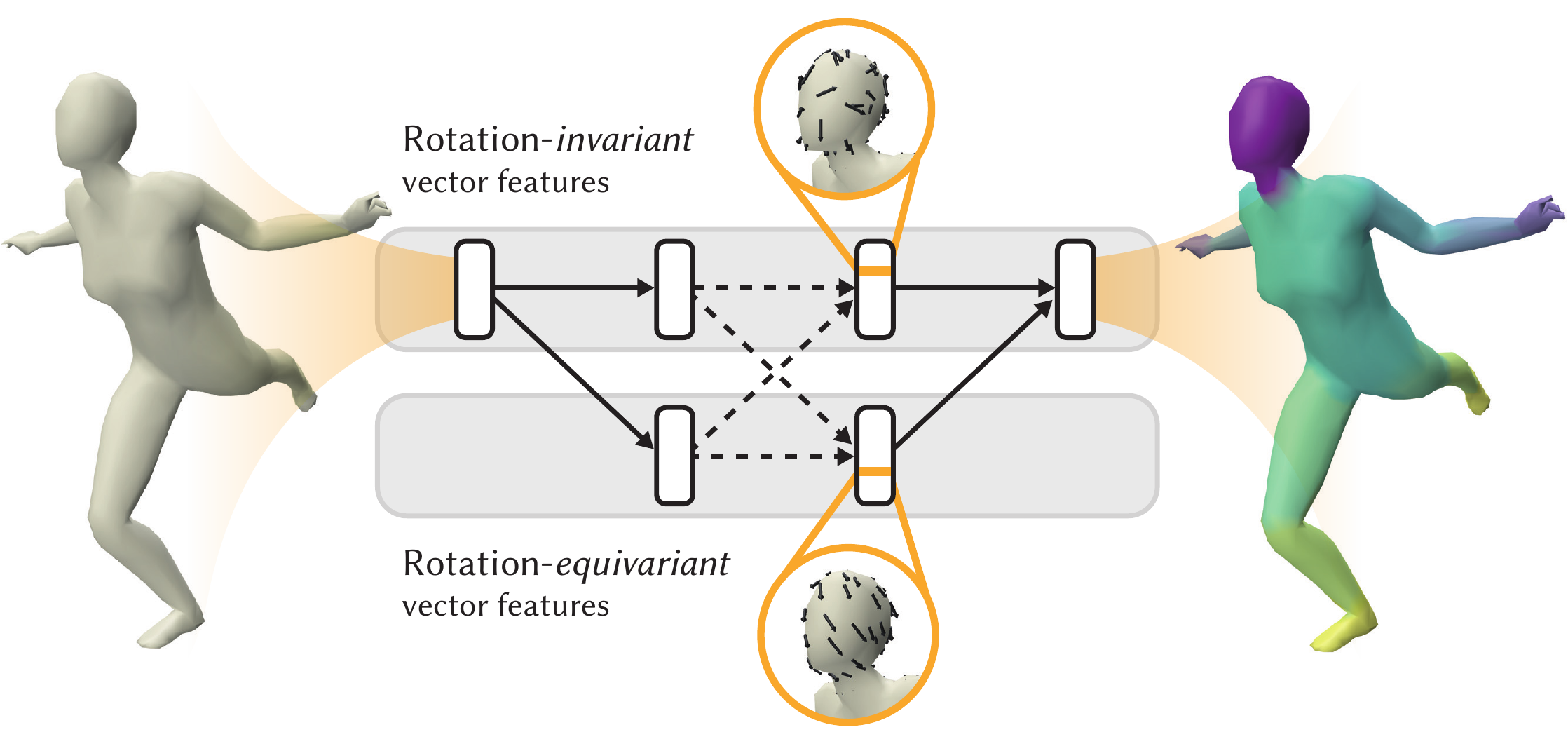}
    \Description{A mesh of a human shape is input to a neural network with two streams, a rotation-invariant stream and a rotation-equivariant stream. An example of the features inside each stream is shown as a vector field on the mesh. The output of the network is a colored mesh, a different color for each body part.}
    \caption{We propose CNNs on surfaces that operate on vectors and separate rotation-equivariant and rotation-invariant features.}
    \label{fig:intro}
\end{figure}
\revised{
To apply a convolution kernel defined on $\mathbb{R}^2$ to a function at a point on a surface, the Riemannian exponential map is used to locally lift the function from the surface to a function defined on the tangent plane at the point. By identifying the tangent plane with $\mathbb{R}^2$, the convolution of the kernel and the lifted function can be computed. In this way, the convolution kernel can be applied everywhere on the surface. However, a problem arises, since there is a rotational degree of freedom when $\mathbb{R}^2$ is identified with a tangent plane. Moreover, the transport of filters on a surface depends on the chosen path. If a filter is transported along two different ways from one point of a surface to another, the transported filters are rotated against each other. This \textit{rotation ambiguity} problem is fundamental and caused by the curvature of the surface. }

The rotation ambiguity problem can be addressed by specifying a coordinate system at each point of the surface, \eg~according to the principal curvature directions \cite{boscaini2016learning,monti2017geometric,Pan18} or the direction of maximum activation \cite{masci2015geodesic,Sun18}. As a consequence, however, the coordinate systems in the local neighborhoods are arranged in different patterns for each point. For a network this means that, when features are aggregated to form the next layer of the network, the features are not only dependent on the sequence of convolution kernels that are applied, but also on the arrangement of coordinate systems in the local neighborhoods. Loosely speaking, the information contained in the features in the neighborhood of a point can be arbitrarily rotated against the coordinate system at the point. One can think of this as in a cubist painting, where the elements that make up a structure, for example the eyes, nose, and mouth on a face, are rotated against each other.

\revised{
Multi-Directional Geodesic CNNs (MDGCNN) \cite{poulenard2018multi} provide an alternative approach to the rotation ambiguity problem. The idea is to sample the rotational degree of freedom regularly, to compute the convolutions for each of the sample directions, and to feed the results for all directions into the network. The multi-directional features are pooled at the final layer. 
The disadvantage of this approach is that each filter must not only be evaluated once at each point, but also for rotated versions of the filter and the results need to be stored. To build an operable network, the filters are only evaluated in some sample directions and the results are linearly interpolated to get results in intermediate directions, introducing inaccuracies in each consecutive layer. }

\revised{
We introduce a novel network architecture that does not suffer from the rotation ambiguity problem. The features in this network are rotation-equivariant and organized in streams of different equivariance classes (\autoref{fig:intro}).
These streams interact with each other in the network and are finally merged into a rotation-invariant output. 
The resulting network is independent of the choice of coordinate systems in the tangent spaces, which means that it does not suffer from the rotation ambiguity problem.
To realize this network, we work with vector-valued, instead of scalar-valued, features and introduce rotation-equivariant convolution and pooling operators on meshes. 
The convolution operators use the Riemannian exponential map and parallel transport to convolve vector-valued kernels on $\mathbb{R}^2$ with tangent vector fields on meshes. 

As kernels on $\mathbb{R}^2$ we use the circular harmonics, which are known to be rotation-equivariant. 
We prove that the resulting discrete convolution operators on meshes are equivariant with respect to rotations of the coordinate system in the tangent spaces. 
Due to the rotation-equivariance property, it suffices to compute the convolution at each point with respect to an arbitrary reference coordinate system in the tangent plane. Then, if the result with respect to any other coordinate system is required, one only needs to transform the result of the convolution in the reference coordinate system to the other coordinate system. The rotation-equivariance property is still valid if several of these filters are applied consecutively, \eg~in the deeper layers of a network. 
Rotation-equivariance enables the vector-valued convolution operator to always align the features in a local neighborhood of a point to the coordinate system at the point. 
Our network architecture builds upon Harmonic Nets~\cite{worrall2017harmonic}, which are used for rotation-equivariant learning on images. Therefore, we call the networks \emph{Harmonic Surface Networks} (HSN). }

We implement the Harmonic Surface Networks for triangle meshes. Based on  circular harmonic filters, we derive discrete convolution filters that are equivariant with respect to basis transformations in the tangent spaces associated to the vertices of the mesh. The parameters of the filters separate radial and angular direction, which leads to a low number of parameters for each kernel, when compared to other filter kernels for surface meshes. We experimentally analyze the properties and performance of the HSNs and compare the performance to other geometric CNNs for surface meshes. 

In summary, our main contributions are:
\begin{itemize}
\item  We introduce Harmonic Surface Networks, which combine a vector-valued convolution operation on surfaces and rotation-equivariant filter kernels to provide a solution to the rotation ambiguity problem of geometric deep learning on surfaces. 
\item  Based on circular harmonics, we derive convolution filters for meshes that have the desired rotational-equivariance properties at the discrete level, and, additionally, allow for separating learning of parameters in radial and angular direction.
\item  We analyze the Harmonic Surface Networks for surface meshes and compare the performance to alternative approaches.  
\end{itemize}
\section{Related work}
In this section, we provide a brief overview of work on geometric deep learning closely related to our approach. For a comprehensive survey on the topic, we refer to \cite{bronstein2017geometric}.

\paragraph*{Charting-based methods} 
Closest to our work are so-called \emph{charting-based methods}. For convolution, these methods use local parametrizations to apply filter kernels to features defined on the surface. A seminal approach in this direction are the Geodesic CNNs (GCNN, \cite{masci2015geodesic}). They consider a parametric model of convolution filters in $\mathbb{R}^2$ using Gaussian kernels placed on a regular polar grid. For convolution, the filters are mapped to the surface using the Riemannian exponential map. 
Other intrinsic methods use anisotropic \revised{heat} kernels \cite{boscaini2016learning}, learn the shape parameters of the kernels \cite{monti2017geometric}, learn not only the parameters of the kernels but also pseudo-coordinates of the local parametrizations \cite{verma2018feastnet}, or use B-Splines \cite{Fey2018} or Zernike polynomials \cite{Sun18} instead of Gaussians. 

A challenging problem for these constructions is the lack of canonical coordinate systems on a surface, inducing the rotation ambiguity problem, discussed in the introduction. To address the ambiguity, maximum pooling over the responses to a sample of different rotations can be used \cite{masci2015geodesic,Sun18} or the coordinate systems can be aligned to the \revised{(smoothed)} principal curvature directions \cite{boscaini2016learning,monti2017geometric,Pan18}. 
Both approaches have their downsides. Angular pooling discards directional information and the alignment to principal curvature directions can be difficult as the principal curvature directions are not defined at umbilic points and can be unstable in regions around umbilic points. A sphere, for example, consists only of umbilic points. 
\revised{A solution to
this problem are the Multi-Directional Geodesic CNNs (MDGCNN) \cite{poulenard2018multi}. At every point, the filters are evaluated with respect to multiple choices of coordinate systems. This can be formalized by describing the features by so-called directional functions instead of regular functions. Parallel transport is used to locally align the directional functions for convolution.}
Our approach provides a different solution to the rotation ambiguity problem that does not require computing the filters in multiple directions and storing the results. 
Once the filter in one direction is computed, the rotation-equivariance property of our filters allows us to obtain the results for the other directions by applying a rotation. We compare our approach to MDGCNN in Section~\ref{sect.experiments}. 
Parallel Transport Vector Convolution~\cite{schonsheck2018parallel} is a convolution operation for vector fields on manifolds that was used to learn from vector valued data on surfaces. Similar to this approach, we also use parallel transport to define convolution for vector fields on surface.
A concept for the construction of Gauge-equivariant convolution filters on manifolds is introduced in \cite{Cohen2019} along with a concrete realization of a Gauge CNN for the icosahedron. In recent work, an adaption of Gauge CNNs to surface meshes was introduced \cite{haan2020gauge}.

Instead of local charts, one can also parametrize an area globally and then learn on the parameter domain \cite{maron2017convolutional,DBLP:journals/corr/abs-1812-10705}. 
An advantage of this approach is that standard CNNs can be applied to the parameter domain. A disadvantage is that global parametrizations lead to larger metric distortion than local parametrizations. In addition, typically different global parametrization methods are needed for surfaces of different genus. 

\paragraph*{Spectral methods and graph CNNs}
An alternative to charting-based methods are spectral methods. For images, CNNs can operate in the Fourier domain. Spectral Networks \cite{bruna2013spectral} generalize CNNs to graphs using the spectrum of a graph Laplacian. To reduce the computational complexity, ChebNet \cite{defferrard2016convolutional} and Graph Convolutional Networks (GCN) \cite{kipf2016semi} use local filters based on the graph Laplacian.
Recently, various approaches for defining CNNs on graphs have been introduced, we refer to \cite{Zhou2018GraphNN,wu2019comprehensive} for recent surveys. 
This line of work diverges from our approach since we specialize to discrete structures that describe two-dimensional manifolds. Furthermore, we aim at 
analyzing the surface underlying a mesh, which means our method aims to be agnostic of the connectivity of the mesh, \ie~the graph underlying a mesh.
The concept of local filtering has also been applied to surfaces using the Laplace--Beltrami and Dirac operator \cite{kostrikov2017surface}.
MeshCNN \cite{hanocka2019meshcnn} generalizes graph CNNs to mesh structures by defining convolution and pooling operations for triangle meshes. 

\paragraph*{Point clouds}
PointNet \cite{qi2017pointnet} is an approach for creating CNNs for unordered sets of points. First, a neighborhood is constructed around each point with a radius ball. To retrieve a response for point $i$, a function is applied to each neighbor of $i$ and the maximum activation across $i$'s neighbors is stored as the new response for $i$. PointNet++ \cite{qi2017pointnet++} is an extension of PointNet with hierarchical functionality. Because PointNet applies the same function to each neighbor, it is effectively rotation-invariant.
DGCNN \cite{wang2018dynamic} extends PointNet++ by dynamically constructing neighborhood graphs within the network. This allows the network to construct and learn from semantic neighborhoods, instead of mere spatial neighborhoods.
PointCNN \cite{Li2018} learns a $\chi$-transformation from the point cloud and applies the convolutions to the transformed points.
\revised{TextureNet \cite{Huang19} uses 4-rotational symmetric fields to define the convolution domains and applies operators that are invariant to 4-fold symmetries.}
While we evaluate our approach on meshes, our method can be used to process point clouds sampled from a surface.

\revised{\paragraph{Symmetric spaces}}
Specialized approaches for symmetric surfaces such as the sphere \cite{Cohen2018,Kondor2018} have been proposed. On the one hand, the approaches profit from the highly symmetric structure of the surfaces, while on the other hand they are limited to data defined on these surfaces.

\paragraph{Rotation-equivariance}
Our approach builds on Harmonic Networks \cite{worrall2017harmonic}. This work is part of a larger effort to create group-equivariant networks. 
Different approaches such as steerable filters \cite{freeman1991design, liu20122d, cohen2016steerable, DBLP:conf/nips/WeilerC19}, hard-baking transformations in CNNs \cite{cohen2016group, marcos2016learning, fasel2006rotation, laptev2016ti}, and learning generalized transformations \cite{hinton2011transforming}
have been explored. 
Most relevant to our approach are steerable filters, since we use features that can be transformed, or steered, with parallel transport. The core idea of steerable filters is described by \cite{freeman1991design} and applied to learning by \cite{liu20122d}. The key ingredient for these steerable filters is to constrain them to the family of circular harmonics. \cite{worrall2017harmonic} added a rotation offset and multi-stream architecture to develop Harmonic Networks. The filters in Harmonic Networks are designed in the continuous domain and mapped to a discrete setting using interpolation.
Harmonic Networks was built on by \cite{thomas2018tensor} for Tensor Field Networks. Tensor Field Networks achieve rotation- and translation-equivariance for 3D point clouds by moving from the family of circular harmonics to that of spherical harmonics. In doing so, they lose the phase offset parameter, as it is not commutative in SO(3). Spherical harmonics were also used for rigid motion-invariant processing of point clouds~\cite{Poulenard2O19}.
\section{Background}
\paragraph*{Harmonic Networks} 
Harmonic Nets (H-Nets) \cite{worrall2017harmonic} are rotation-equivariant networks that
can be used to solve computer vision tasks, such as image classification or segmentation, in such
a way that a rotation of the input does not affect the output of the network. H-Nets restrict their filters to circular harmonics, resulting in the following filter definition:
\begin{equation}
W_{m}(r,\theta,R,\beta)=R(r)e^{\ii(m\theta+\beta)}%
,\label{eq:circularharmonic}%
\end{equation}
where $r$ and $\theta$ are polar coordinates, $R:\mathbb{R}_{+}\rightarrow
\mathbb{R}$ is the \textit{radial profile}, $\beta\in\lbrack0,2\pi)$ is a
\textit{phase offset}, and $m\in\mathbb{Z}$ is the rotation order. 
The cross-correlation of $W_{m}$ with a complex function $x$ at a point $p$ is given by the integral
\begin{equation}
\lbrack W_{m}\star x](p)=\int_{0}^{\epsilon}\int_{0}^{2\pi}R(r)e^{\ii(m\theta
+\beta)}x(r,\theta)\,r\,d\theta\,dr.
\label{eq:circularharmonicIntegral}%
\end{equation}
This filter is rotation-equivariant
with respect to rotations of the domain of the input to the filter:
\begin{equation}
\lbrack W_{m}\star x^{\phi}](p)=e^{\ii m\phi}[W_{m}%
\star x^{0}](p),\label{eq:rotation}%
\end{equation}
where $x^{0}(r,\theta)$ is a complex function and $x^{\phi
}(r,\theta)=x(r,\theta-\phi)$ is the same function defined on a
rotated domain. The rotation order~$m$ determines how the output of the
filters changes when the domain of the input is rotated. For $m=0$, the result
does not change and the filter is rotation invariant. For $m\geq1$ the result
is rotated by an angle that is $m$ times the original angle, which we refer to as $m$-equivariance. 

\begin{figure}[h]
\centering
\includegraphics[width=\columnwidth]{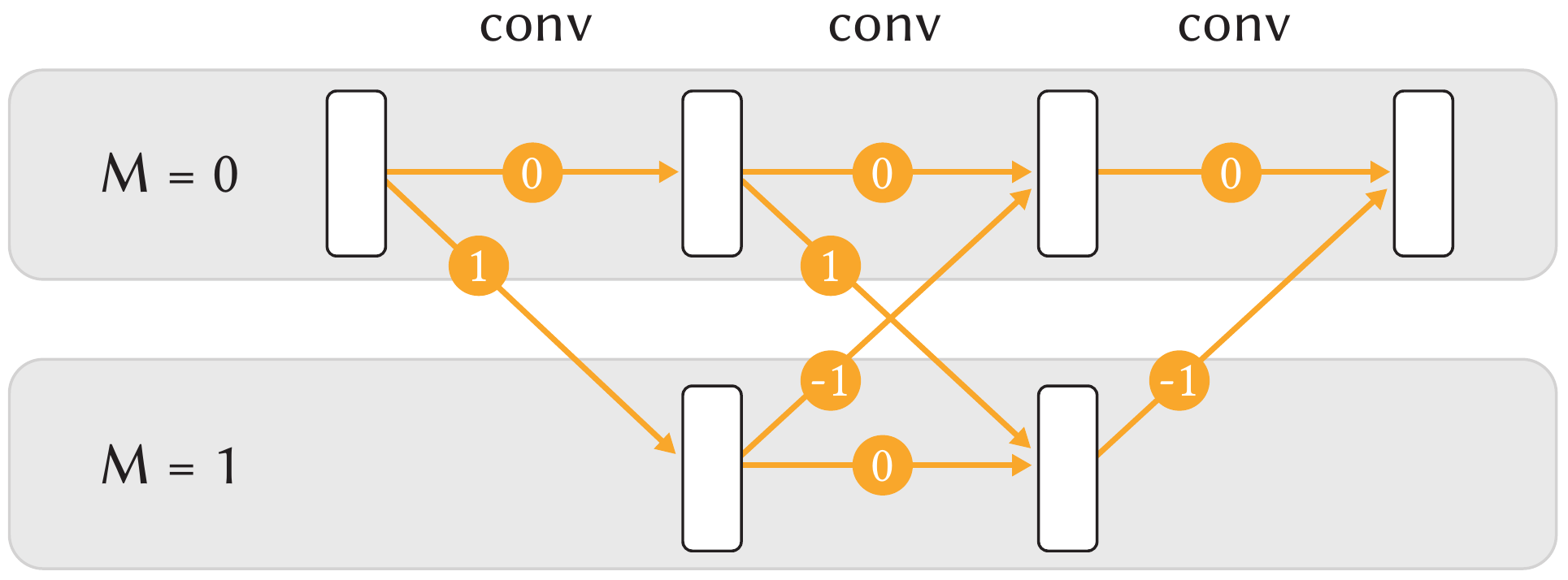}
\Description{A schematic overview of Harmonic Networks: two streams with sequences of feature vectors that are connected with convolutions. The convolutions are labeled by their rotation order. A convolution from the M=0 stream with order 1 crosses to the M=1 stream.}
\caption{Harmonic Networks separate the result of different rotation order convolutions into
streams of $M$-equivariance.}%
\label{fig:streams}
\end{figure}

An important property of these filters is that, if filters of orders $m_{1}$ and $m_{2}$ are chained, rotation-equivariance of order $m_{1}+m_{2}$ is obtained:
\begin{align}
\lbrack W_{m_{1}}\star[W_{m_{2}}\star x^{\phi}] &  =e^{\ii
m_{1}\phi}e^{\ii m_{2}\phi} \lbrack W_{m_{1}}\star[W_{m_{2}}\star x^{0}]\\
&  =e^{\ii(m_{1}+m_{2})\phi}\lbrack W_{m_{1}}\star[W_{m_{2}}\star x^{0}].
\end{align}
This is integral to the rotation-equivariance property of the network as a whole. The network architecture of H-Nets is structured in separate streams per
rotation order as illustrated in \autoref{fig:streams}. For each feature 
map inside a stream
of order $M$, we require that the sum of rotation orders $m_{i}$ along any
path reaching that feature map equals $M$. In the last
layer of an H-Net, the streams are fused to have the same rotation order. The rotation order of the output stream determines the class of equivariance of the network and is chosen to match the demands
of the task at hand. For rotation invariant tasks, the last
layer has order $m=0$. Still, in the hidden layers of the network,
rotation-equivariant streams capture and process information that is not
rotation-invariant, which yields improved performance compared to networks
built only from rotation-invariant filters.

\begin{figure}[b]
    \centering
    \includegraphics[width=0.6\columnwidth]{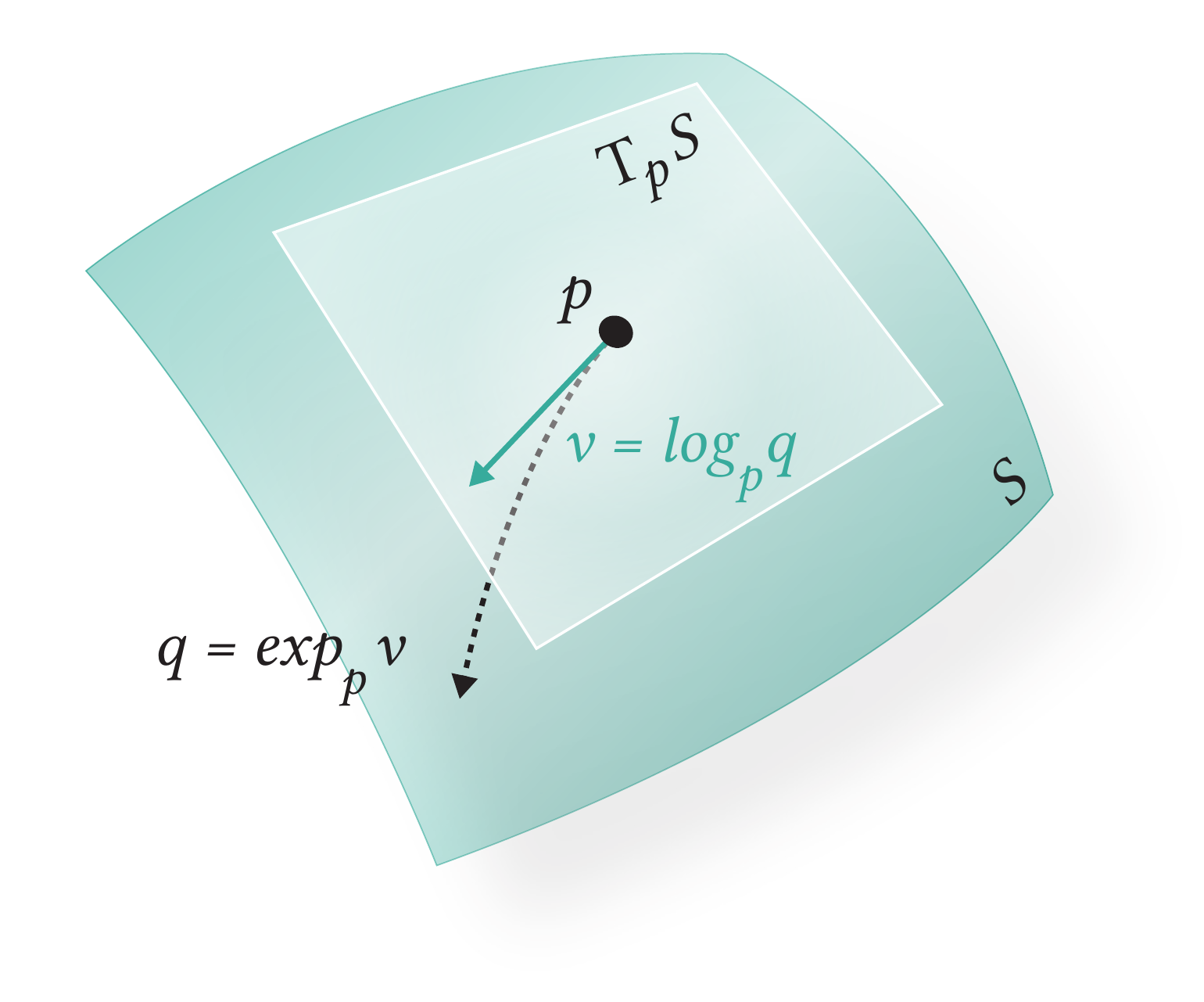}
    \Description{A simplified surface. There are two points on the surface: p and q. A vector v in the tangent plane points from p in the direction of q. The exponential map maps vector v to point q.}
    \caption{The Riemannian exponential- and logarithmic map.}
    \label{fig:expmap}
\end{figure}

\paragraph*{Riemannian exponential map}

Let $v$ be a vector in the tangent plane $T_p\mathcal{S}$ at a point $p$ 
of a surface $\mathcal{S}$. Then, there is exactly one geodesic curve starting at 
$p$ in direction $v$. If we follow this geodesic curve until we have 
covered a length that equals the norm of $v$, we end up at a point $q$ 
on the surface. The Riemannian exponential map associates each vector $v$ in $
T_p\mathcal{S}$ to the corresponding point $q$ on the surface. This map is 
suitable as a local parametrization of the surface, because it is a 
local diffeomorphism, \ie, bijective and smooth with smooth inverse. 
Furthermore, the mapping is an isometry in radial direction away from $p
$. The construction of the Riemannian exponential map and its 
inverse, the logarithmic map, is illustrated in \autoref{fig:expmap}.
An example of the Riemannian exponential map is the azimuthal 
projection of the sphere, which is used in cartography and is included 
in the emblem of the United Nations.

In graphics, the Riemannian 
exponential map is used, for example, for texture decalling~\cite{
schmidt2006interactive} and shape modeling~\cite{schmidt2010meshmixer}. 
In geometric deep learning~\cite{bronstein2017geometric}, it is used to map 
filter kernels defined on the tangent plane to filters defined on the 
surface and to lift functions defined on the surface to functions 
defined on the tangent plane. 
Recent approaches for computing the Riemannian exponential map on 
surface meshes are based on heat diffusion~\cite{
Sharp:2019:VHM,herholzefficient}. These methods reduce the computation 
of the exponential map to solving a few sparse linear systems \revised{and can be accurately computed globally}. Alternatives are approaches based on Dijkstra's algorithm~\cite{
schmidt2006interactive,melvaer2012geodesic}.

\paragraph*{Parallel transport of vectors}

In the Euclidean plane one can describe vector fields by specifying $x$ and $y$ coordinates relative to a global coordinate system. There is no such coordinate system on curved surfaces. To be able to compare vectors at neighboring points $p$ and $q$ of a surface, we use the parallel transport along the shortest geodesic curve $c$ connecting the points. If $v$ is a tangent vector at $p$ which has the angle $\alpha$ with the tangent vector of $c$ at $p$, then the vector transported to $q$ is the tangent vector that has an angle of $\alpha$ with the tangent of $c$ at $q$ and has the same length as $v$. 

To describe tangent spaces on meshes and to compute the parallel transport, we use the Vector Heat Method \cite{
Sharp:2019:VHM}, which we also use to calculate the Riemannian exponential map.


\section{Method}
In this section, we describe the building blocks of Harmonic Surface Networks. We start by
introducing notation and then discuss convolutions, \revised{linearities,} non-linearities, and
pooling. Finally, we discuss why the networks provide a solution to the
rotation ambiguity problem by analyzing how the choices of coordinate systems in
the tangent spaces affect the convolution operations and HSNs.

\paragraph*{Notation}

The features in an HSN are associated to the vertices (or just a subset of the
vertices) of a triangle mesh. To each vertex, we assign a coordinate system
in the tangent plane at the vertex and use complex numbers to represent
tangent vectors with respect to the coordinate system. We denote the
feature vector of rotation order $M$ in network layer $l$ at vertex $i$ by
$\mathbf{x}_{i,M}^{l}$. To simplify the notation, we leave out the indices $l$
and $M$ when they are not relevant.
For HSN, these feature vectors are
complex-valued. Every operation applied to these vectors is performed
element-wise. For example, when we multiply a feature vector with a complex number,
each component of the vector is multiplied by this number. We use parallel
transport along the shortest geodesic from vertex $j$ to vertex $i$ to
transport the feature vectors. The transport depends on the geometry of the
mesh, the chosen coordinate systems at the vertices, and the rotation order
of the feature. It amounts to a rotation of the features. In practice, we 
store the rotation as a complex number with the angle of rotation 
$\phi_{ji}$ as its argument and apply rotations to vectors via complex 
multiplication:
\begin{equation}
P_{j\rightarrow i}(\mathbf{x}_{j,M})=e^{\ii(M\phi_{ji})}\mathbf{x}%
_{j,M}.\label{eq.transport}%
\end{equation}
For every vertex, we construct a parametrization of a local neighborhood of the
vertex over the tangent plane using the Riemannian logarithmic map. We
represent the map by storing polar coordinates $r_{ij}$ and $\theta_{ij}$ with
respect to the coordinate system in the tangent plane for every vertex $j$ in
the neighborhood of vertex $i$.

\paragraph*{Convolution}

The convolution layers of HSNs combine the rotation-equivariance of
circular harmonics with the transport of features along surfaces. 
We use compactly supported filter kernels to limit the number of
neighboring features that are used in a convolutional layer. Let
$\mathcal{N}_{i}$ denote the set of vertices that contribute to the convolution
at $i$. In the case of continuous features on a Euclidean space,
the correlation with $W_{m}$ is given by an integral, see
(\ref{eq:circularharmonicIntegral}). For discrete meshes, we approximate the
integral with a sum and use parallel transport and the Riemannian logarithmic
map to evaluate the filter:%
\begin{equation}
\mathbf{x}_{i,M+m}^{(l+1)}=\sum_{j\in\mathcal{N}_{i}}w_{j}\left(
R(r_{ij})e^{\ii(m\theta_{ij}+\beta)}P_{j\rightarrow i}(\mathbf{x}_{j,M}%
^{(l)})\right)  ,\label{eq:approxintegr}%
\end{equation}
where $r_{ij}$ and $\theta_{ij}$ are the polar coordinates of point $j$ in $T_{i}\mathcal{S}$, and $P_{j\rightarrow
i}$ is the transport of features defined in~(\ref{eq.transport}). 
The integration weights $w_{j}$ are 
\begin{equation}
w_{j}=\frac{1}{3}\sum_{jkl}A_{jkl} \text{ \qquad\ }
\label{eq:weighting}
\end{equation}
where $A_{jkl}$ is the area of triangle jkl and the sum runs over all triangles of the mesh containing vertex $j$. The weights can be derived from numerical integration with piecewise linear finite elements, see \cite{Wardetzky2007}.

The radial profile $R$ and the phase offset $\beta$ are the learned parameters of the filter. To represent the radial profile, we learn values at equally spaced locations in the interval $\lbrack0,\epsilon\rbrack$, where $\epsilon$ provides a bound on the support of the filter. To get a continuous radial profile, the learned values are linearly interpolated as illustrated in \autoref{fig:radialprofilehsn}. At a point with radial coordinate $r_{ij}$, the radial profile is
\begin{equation}
    R(r_{ij}) = \sum_{q}^Q \mu_{q}(r_{ij}) \rho_{q},
\label{eq:rprofile}
\end{equation}
where $\mu_{q}(r_{ij})$ is a linear interpolation weight and $\rho_{q}$ a learned weight matrix. 
We set $\rho_Q = 0$, which bounds the support of the kernel. 

\begin{figure}[t]
    \centering
    \includegraphics[width=0.6\columnwidth]{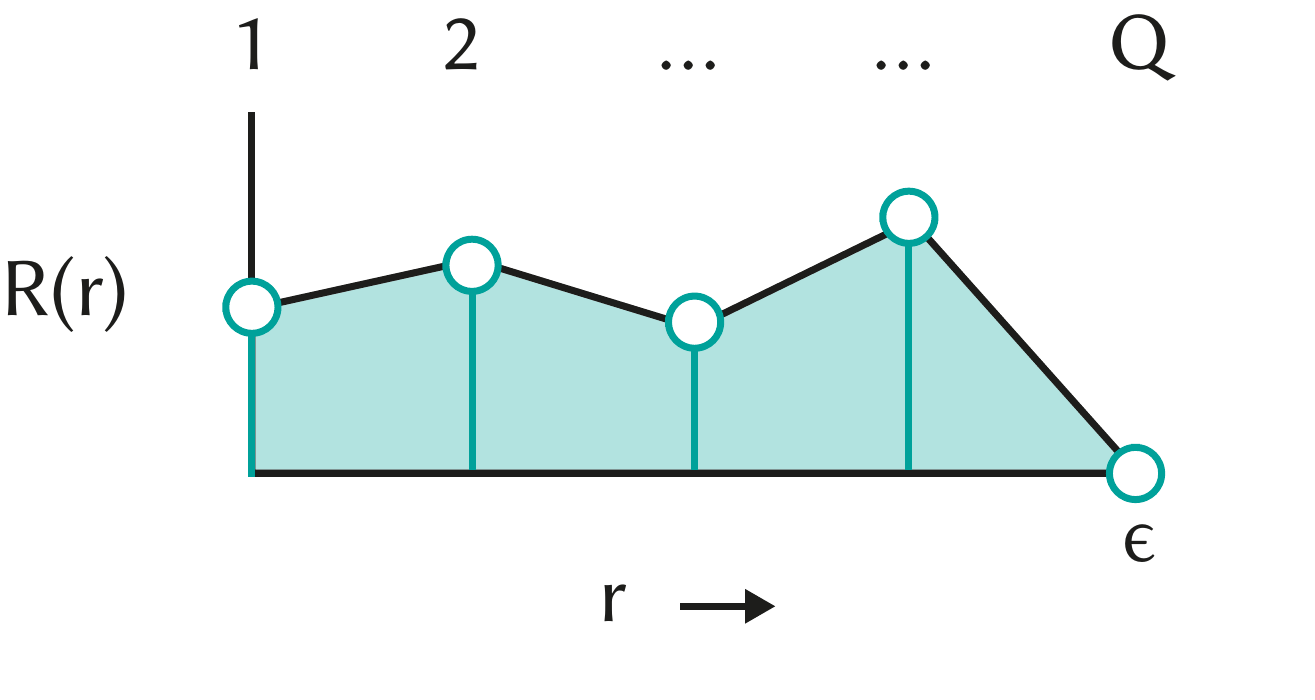}
    \Description{A graph of a function that is linearly interpolated from Q control points.}
    \caption{We parametrize the radial profile by learning the value at $Q$ equally spaced rings and linearly interpolating for values in between.}
    \label{fig:radialprofilehsn}
\end{figure}

To speed up training, we precompute three components: the
integration weight, the interpolation weight to each radial profile point, and
the rotation by the angle $\theta_{ij}$:%
\begin{equation}
\text{precomp}_{ij}=\left(  w_{j}\mu_{q}(r_{ij})e^{\ii m\theta_{ij}}\right)  .
\end{equation}
We precompute the polar coordinates and integration weights with the Vector
Heat method \cite{Sharp:2019:VHM}. The precomputation is stored in a [Q x 2]
matrix for each $(i,j)$ pair.

\revised{
\paragraph*{Linearities}
Linearities are applied to complex features by applying the same linearity to both the real and imaginary component, resulting in a linear combination of the complex features.
}

\paragraph*{Non-Linearities}

We follow Harmonic Networks \cite{worrall2017harmonic} for complex
non-linearities: non-linearities are applied to the radial component of the complex
features, in order to maintain the rotation-equivariance property. In our experiments, we used a complex version of ReLU%
\begin{equation}
\mathbb{C}\text{-ReLU}_{b}(\mathbf{X}e^{\ii\theta})=\text{ReLU}%
(\mathbf{X}+b)e^{\ii\theta},
\end{equation}
where $b$ is a learned bias added to the radial component. If the non-linearity were to be applied without bias, it would be an identity operation, as the radius is always positive.

\paragraph*{Pooling and unpooling}

\label{section:fpstech} Pooling layers downsample the input by aggregating
regions to representative points. We need to define an aggregation operation
suitable for surfaces and a way to choose representative points and create
regions. We use farthest point sampling and cluster all non-sampled points to
sampled points using geodesic nearest neighbors.

The aggregation step in pooling is performed with parallel transport,
since the complex features of points within a pooling region do not exist in
the same coordinate system. Thus, we define the aggregation step for
representative point $i$ with points $j$ in its pooling cluster $\mathcal{C}%
_{i}$ as follows:
\begin{equation}
\mathbf{x}_{i,M}=\square_{j\in\mathcal{C}_{i}}P_{j\rightarrow i}%
(\mathbf{x}_{j,M}),
\end{equation}
where $\square$ is any aggregation operation, such as sum, max, or mean. In our implementation, we use mean pooling. Pooling happens \textit{within} each rotation order stream, hence the rotation
order identifier $M$ for both $\mathbf{x}_{i}$ and $\mathbf{x}_{j}$.

The sampling of points per pooling level and construction of corresponding kernel supports are performed as a precomputation step, so we can compute the
logarithmic map and parallel transport for each pooling level in precomputation. 

\revised{Unpooling layers upsample the input by propagating the features from the points sampled in the pooling layer to their nearest neighbors. Parallel transport is again applied to align coordinate systems.}

\paragraph*{Rotation orders}

To maintain rotation-equivariance throughout the network, the output of the filters is separated in streams of rotation
orders. The output of a filter applied to $\mathbf{x}_{j,M}$ with rotation order $m$ should
end up in rotation order stream $M^{\prime}=M+m$. The output from two
convolutions resulting in the same stream is summed. For example, a
convolution on $\mathbf{x}_{j,1}$ with $m=-1$ and a convolution on
$\mathbf{x}_{j,0}$ with $m=0$ both end up in the stream
$M'=0$ and are summed. \revised{A visual overview can be found in \autoref{fig:resnet}, on the right.} We only apply parallel transport to inputs from the
$M>0$ rotation-order streams, as the values in the $M=0$ stream are
rotation-invariant.

\begin{figure}[t]
    \centering
    \includegraphics[width=\columnwidth]{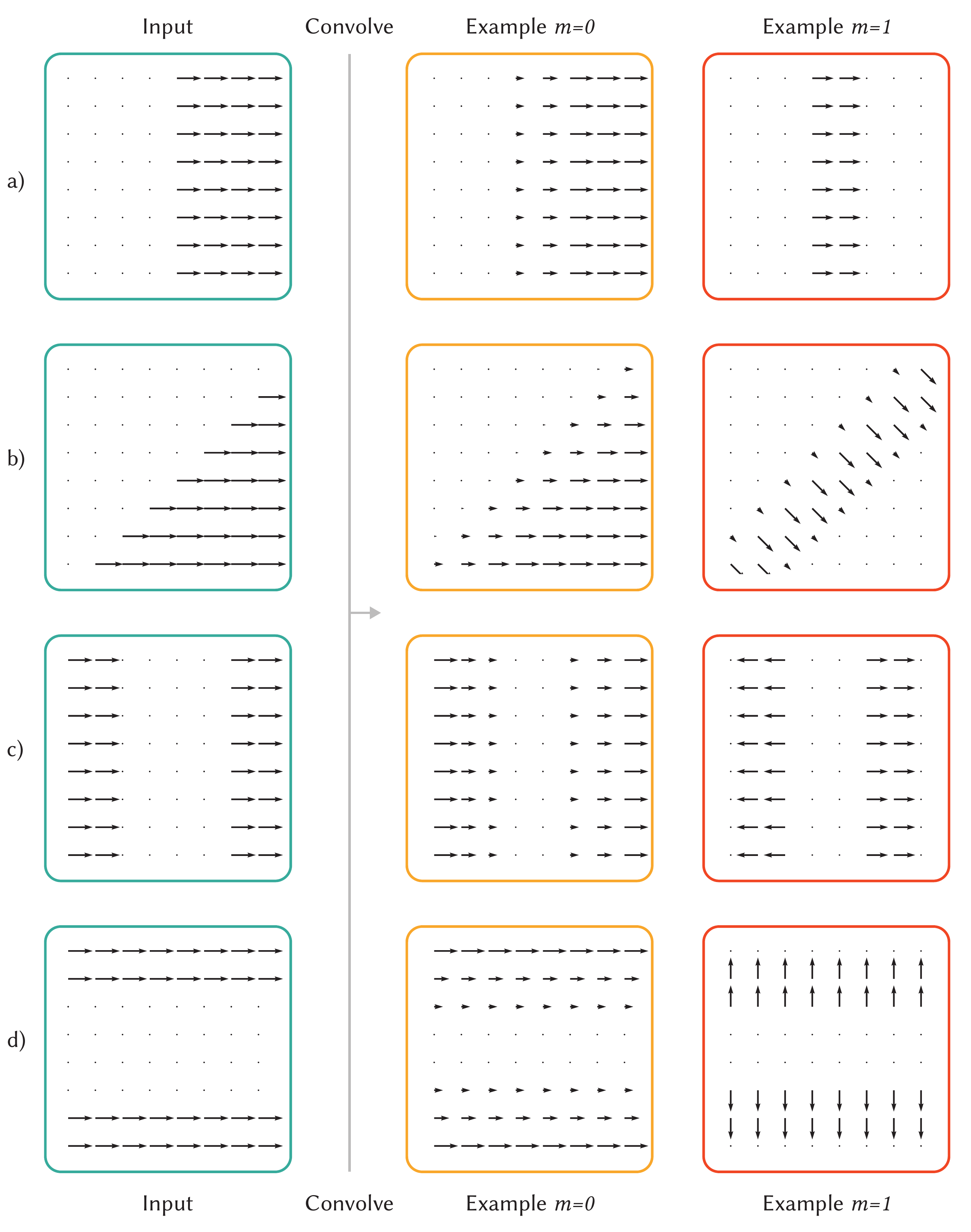}
    \Description{A grid of images with four rows and three columns. The first column shows input vector fields. These vector fields are convolved with the circular harmonic kernels of rotation order m=0 and m=1. The second and third column show the results for the m=0 and m=1 convolution, respectively. The inputs are the following: a) a vertical edge, b) a diagonal edge, c) two vertical edges, and d) two horizontal edges. The m=0 output smoothes the input vector field, the m=1 output shows vectors activating on the edge boundaries.}
    \caption{Examples of simple input vector fields (one feature) and the response of a convolution with $m=0$ and $m=1$ kernels.}
    \label{fig:examples}
\end{figure}

\paragraph*{Properties of HSNs}

In the following, we show that the result of convolutions
(\ref{eq:approxintegr}) as well as the layers of HSNs commute with
transformations of the coordinate systems in the tangent spaces. This means
that we can choose any coordinate system in the tangent spaces and use it
to build, train, and evaluate our networks. A different choice of the
coordinate systems leads to the same network, only with transformed features. 

As a consequence, HSNs do not suffer from the rotation ambiguity problem,
which we described in the introduction. The rotation ambiguity problem is caused by
the fact that due to the curvature of a surface, there is no consistent choice
of coordinate systems on a surface. So if a filter that is defined on $%
\mathbb{R}
^{2}$ is to be applied everywhere on the surface, coordinate systems in the
tangent planes must be defined. Since there is no canonical choice of
coordinate systems, the coordinate systems at pairs of neighboring points are
not aligned. Due to the commutativity property (Lemma \ref{lem.commute}), a
convolution layer of our HSNs can use the features that are computed in
arbitrary coordinate systems and locally align them when computing the 
convolutions. 

In contrast, a network without that property cannot locally align the
features. If one would want to align the coordinate systems' features locally, they would
have to recompute the features starting from the first layer. Since this is
not feasible, one needs to work with non-aligned coordinate systems. 
The result is that the same kernel is applied with different coordinate systems at each location. Moreover, as features are combined in later layers, the network will learn from local relations of coordinate systems. This is undesirable, as these relations hold no meaningful information: they arise from arbitrary circumstances and choices of coordinate systems.

To prove that the features of HSNs commute with the coordinate changes,
this property must be shown for the individual operations. The non-linearity is invariant to changes of the basis as it only operates on the radial coordinates. Here, we restrict
ourselves to convolution, as for pooling, one can proceed analogous to this proof. 

\begin{lemma}
\label{lem.commute}The convolution (\ref{eq:approxintegr}) commutes with
changes of coordinate systems in the tangent planes.
\end{lemma}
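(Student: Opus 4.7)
The plan is to track how each piece of the convolution formula transforms when we rotate the tangent-plane basis at every vertex $k$ by some angle $\alpha_k$, and then check that the whole right-hand side of (\ref{eq:approxintegr}) picks up exactly the factor $e^{-\ii(M+m)\alpha_i}$ demanded of a rotation-order-$(M+m)$ feature at vertex $i$. That is the meaning of commutativity here: transforming the inputs by a basis change and then convolving yields the same thing as convolving and then applying the basis change to the output.

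First I would lay out the transformation rules for the three ingredients that enter (\ref{eq:approxintegr}). A feature of rotation order $M$ at vertex $j$ transforms as $\mathbf{x}_{j,M}\mapsto e^{-\ii M\alpha_j}\mathbf{x}_{j,M}$ by definition of rotation order. The polar coordinate $\theta_{ij}$, being the angle of the log map of $j$ measured in the basis at $i$, transforms as $\theta_{ij}\mapsto\theta_{ij}-\alpha_i$, while the radial coordinate $r_{ij}$ and the integration weight $w_j$ are invariant. The parallel-transport angle $\phi_{ji}$ depends on the bases at both endpoints; because parallel transport itself is a basis-independent geometric operation, one quickly obtains $\phi_{ji}\mapsto\phi_{ji}+\alpha_j-\alpha_i$ by requiring that the complex representation of the transported vector transforms correctly at both $j$ and $i$.

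Next I would substitute these rules into (\ref{eq:approxintegr}) together with the transport law (\ref{eq.transport}). In the transported feature $e^{\ii M\phi_{ji}}\mathbf{x}_{j,M}$ the $\alpha_j$ contributions cancel, since the $+\ii M\alpha_j$ from the shifted $\phi_{ji}$ is annihilated by the $-\ii M\alpha_j$ coming from the transformed feature itself. What remains is a global factor $e^{-\ii M\alpha_i}$ from the transport together with a factor $e^{-\ii m\alpha_i}$ pulled out of $e^{\ii m\theta_{ij}}$. Combining these yields the single prefactor $e^{-\ii(M+m)\alpha_i}$ in front of the untransformed sum, which is precisely the transformation law for a feature of rotation order $M+m$ at vertex $i$. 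Since this is independent of $j$, the learned quantities $R(r_{ij})$ and $e^{\ii\beta}$ play no role in the argument.

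The main obstacle, and the only place where a sign or convention mistake could creep in, is the transformation law for $\phi_{ji}$: one has to be careful that the basis changes at the two endpoints enter with opposite signs and that this sign is consistent with the rotation-order convention used for features. Once that book-keeping is fixed, the cancellation in the preceding paragraph is mechanical. I would therefore state the $\phi_{ji}$ rule as a short auxiliary observation (one or two lines justifying it from the geometric definition of parallel transport) before presenting the main one-line calculation that completes the lemma.
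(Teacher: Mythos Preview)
Your proposal is correct and follows essentially the same argument as the paper's own proof: track how the angular coordinate, the parallel-transport phase, and the input feature transform under a basis rotation, substitute into (\ref{eq:approxintegr}), and read off the overall factor $e^{\ii(M+m)\cdot(\text{angle})}$. The only organisational difference is that the paper treats the rotation at the centre vertex $i$ and at a neighbour $j$ in two separate steps, whereas you assign an independent angle $\alpha_k$ to every vertex and carry out the cancellation for all of them in one pass; your version is slightly more compact but mathematically identical.
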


\begin{proof}
We represent the coordinate system of a tangent space by specifying the
$x$-axis. Coordinates of points are then given by a complex number. If we
rotate the coordinate system at vertex $i$ by an angle of $-\phi$, the features
of order $M$ transform to
\begin{equation}
\mathbf{\tilde{x}}_{i,M}=e^{\ii M\phi}\mathbf{x}_{i,M},\label{eq.proof1}%
\end{equation}
where $\mathbf{\tilde{x}}_{i,M}$ is the feature in the rotated coordinate
system. The change of coordinate system also affects the polar coordinates of
any vertex $j$ in the tangent plane of $i$ and the transport of features from
any vertex $j$ to $i$
\begin{equation}
\tilde{\theta}_{ij}=\theta_{ij} + \phi \text{ \qquad\ }\tilde{P}_{j\rightarrow
i}(\mathbf{x}_{j,M})=e^{\ii M\phi}P_{j\rightarrow i}(\mathbf{x}%
_{j,M}),\label{eq.proof2}%
\end{equation}
where $\tilde{\theta}_{ij}$ is the angular coordinate of $j$ in the tangent
plane of $i$ with respect to the rotated coordinate system and $\tilde{P}$ is the
transport of features with respect to the rotated coordinate system. Then,
convolution with respect to the rotated coordinate system is given by
\begin{equation}
\mathbf{\tilde{x}}_{i,M+m}^{(l+1)}=\sum_{j\in\mathcal{N}_{i}}w_{j}\left(
R(r_{ij})e^{\ii(m\tilde{\theta}_{ij}+\beta)}\tilde{P}_{j\rightarrow
i}(\mathbf{x}_{j,M}^{(l)})\right)  .\label{eq.proof3}%
\end{equation}
Plugging (\ref{eq.proof2}) into (\ref{eq.proof3}), we get%
\begin{align}
\mathbf{\tilde{x}}_{i,M+m}^{(l+1)}  & =\sum_{j\in\mathcal{N}_{i}}w_{j}\left(
R(r_{ij})e^{\ii m\phi}e^{\ii(m\theta_{ij}+\beta)}e^{\ii M\phi}P_{j\rightarrow
i}(\mathbf{x}_{j,M}^{(l)})\right)  \label{eq.proof4}\\
& =e^{\ii(M+m)\phi}\mathbf{x}_{i,M+m}^{(l+1)}.
\end{align}
The latter agrees with the basis transformations of the features, see
(\ref{eq.proof1}). If the coordinate system at a vertex $j$ that is neighboring
$i$ is rotated by some angle, then this affects the transport of features
$P_{j\rightarrow i}$ from $j$ to $i$ and the feature $\mathbf{x}_{j,M}^{(l)}$.
However, since the transport and feature are rotated by the same angle but in
opposite directions, the term $P_{j\rightarrow i}(\mathbf{x}_{j,M}^{(l)})$ is
not affected. 
\end{proof}

\begin{figure*}[t]
    \centering
    \includegraphics[width=\textwidth]{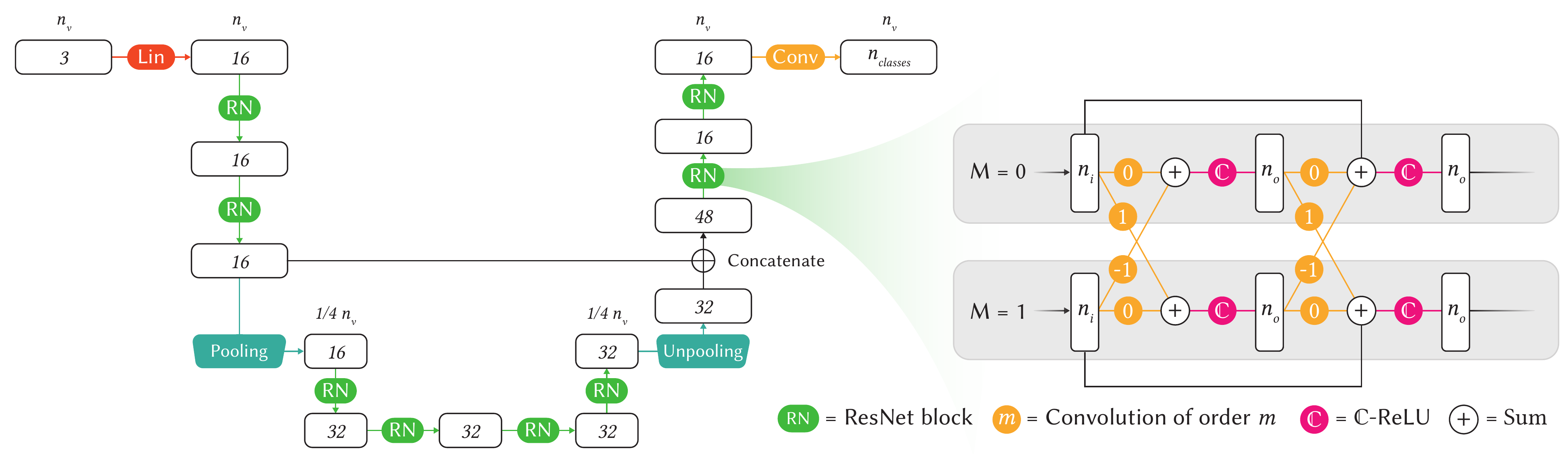}
    \Description{A schematic of the U-ResNet architecture and a detailed schematic of a ResNet Block.}
    \caption{HSN U-ResNet architecture used for correspondence and shape segmentation. On  on the left, the full architecture; on the right, a detail of the ResNet Block.}
    \label{fig:resnet}
\end{figure*}{}

\revised{We visualize the convolution of a simple $m=0$ feature with $m=0$ and $m=1$ filters in \autoref{fig:examples}. For each example, $\beta = 0$ and $R(r) = 1 - r$. The inputs are basic patterns: a) a vertical edge, b) a diagonal edge, c) two vertical edges, and d) two horizontal edges. We observe that $m=0$ convolutions smooth the signal and $m=1$ convolutions activate on edge boundaries.
We can also see \autoref{eq.proof1} at work: the input in a) and b) differs by a $45\degree$-rotation of the domain. The  $m=0$ outputs therefore are related by the same rotation of the domain. The equivariant $m=1$ outputs are related by a rotation of the domain and an additional rotation of the features. The same holds for c) and d), now with a rotation by $90\degree$.}

\paragraph*{Discretization}
In this paper, we detail Harmonic Surface Networks for meshes, as they are sparse and convenient representations of surfaces. Yet, we have attempted to formulate the building blocks for HSNs as general as possible. Thus, one could replace the use of the words `mesh' and `vertex' with `point cloud' and `point' and the method would remain largely the same. The main differences would be (1) how to compute the logarithmic map and parallel transport and (2) how to define a weighting per point. The first question has been answered by \cite{Sharp:2019:VHM}: the Vector Heat method can compute a logarithmic map for any discretization of a surface. The latter can be resolved with an appropriate integral approximation scheme.

\section{Experiments}\label{sect.experiments}
The goal of our experiments is twofold: we will substantiate our claims about improved performance with comparisons against state-of-the-art methods and we aim to evaluate properties of HSNs in isolation, in particular the benefits of the multi-stream architecture and rotation-equivariance.

\subsection{Implementation}
In the following paragraphs we discuss the architecture and data processing used in our experiments.

Following recent works in geometric deep learning \cite{verma2018feastnet, hanocka2019meshcnn, poulenard2018multi}, we employ a multi-scale architecture with residual connections. More specifically, we recreate the deep U-ResNet architecture from \cite{poulenard2018multi}. The U-ResNet architecture consists of four stacks of ResNet blocks, organized in a U-Net architecture with pooling and unpooling layers (\autoref{fig:resnet}). Each stack consists of two ResNet blocks, which, in turn, consist of two convolutional layers with a residual connection. We use 16 features in the first scale and 32 features in the second scale. Finally, we configure our network with two rotation order streams: $M=0$ and $M=1$, learning a rotation-equivariant kernel for every connection between streams.

\revised{For pooling operations, we sample a quarter of the points on the surface using farthest point sampling and take the average of the nearest neighbors. These neighbors are computed with the Heat Method  \cite{crane2013geodesics}, by diffusing indices from sampled points to all other points with a short diffusion time ($t = 0.0001$). With each pooling step, we grow the filter support by $1/\sqrt{\text{ratio}}$. Thus, we grow the radius with a factor $2$.
At the unpooling stage, we propagate the features from the points sampled in the pooling stage to the nearest neighbors, using parallel transport.}
We use ADAM \cite{Kingma2014AdamAM} to minimize the negative log-likelihood and train for 100 epochs on correspondence and 50 epochs for shape segmentation and mesh classification.

For each experiment, we normalize the scale of each shape, such that the total surface area is equal to 1. \revised{We then compute local supports for each vertex $i$ by assigning all points within a geodesic disc of radius $\epsilon$ to its neighborhood $\mathcal{N}_i$. We normalize the weighting described in \autoref{eq:weighting} by the sum of weights for each neighborhood, to compensate for the different sizes of support of the filters in the different layers, in particular, after pooling and unpooling.}

Next, we employ the Vector Heat Method out of the box \cite{Sharp:2019:VHM, geometrycentral} to compute the logarithmic map and parallel transport for each neighborhood. For our multi-scale architecture, we first compute local supports for each scale and coalesce the resulting graph structures into one multi-scale graph. This way, we can precompute the necessary logarithmic maps in one pass.
Wherever we can, we use raw xyz-coordinates as input to our network. To increase the robustness of the network against transformations of the test set, we randomly scale and rotate each shape with a factor sampled from $\mathcal{U}(0.85, 1.15)$ and $\mathcal{U}(-\frac{1}{8}\pi, \frac{1}{8}\pi)$, respectively. For correspondence, we use SHOT descriptors, to provide a clean comparison against previous methods.

The networks are implemented using PyTorch Geometric \cite{FeyLenssen2019}. The implementation can be retrieved from \\ \url{https://github.com/rubenwiersma/hsn}.

\subsection{Comparisons}
Following from the benefits outlined in the introduction, we expect HSNs to show improved results over existing spatial methods, even though fewer parameters are used for each kernel. We experimentally validate these expected benefits by applying HSN on three tasks: shape classification on SHREC \cite{lian2011}, correspondence on FAUST \cite{Bogo:CVPR:2014}, and shape segmentation on the human segmentation dataset proposed by \cite{maron2017convolutional}.

\begin{table}[]
    \caption{Results for shape classification on 10 training samples per class of HSN against previous work.}
    \label{tab:shapeclassification}
    \begin{tabular}{ll}
    Method              & Accuracy \\ \hline \hline
    \multicolumn{1}{l|}{HSN (ours)}         & \textbf{\revised{96.1\%}}   \\
    \hline
    \multicolumn{1}{l|}{MeshCNN}     & 91.0\%  \\
    \multicolumn{1}{l|}{GWCNN}        & 90.3\%  \\
    \multicolumn{1}{l|}{GI} & 88.6\%  \\
    \multicolumn{1}{l|}{\revised{MDGCNN}} & \revised{82.2\%} \\
    \multicolumn{1}{l|}{\revised{GCNN}} & \revised{73.9\%} \\
    \multicolumn{1}{l|}{SG} & 62.6\% \\
    \multicolumn{1}{l|}{\revised{ACNN}} & \revised{60.8\%} \\
    \multicolumn{1}{l|}{SN}  & 52.7\%  \\
    \end{tabular}
\end{table}

\begin{figure}[b]
    \centering
    \includegraphics[width=\columnwidth]{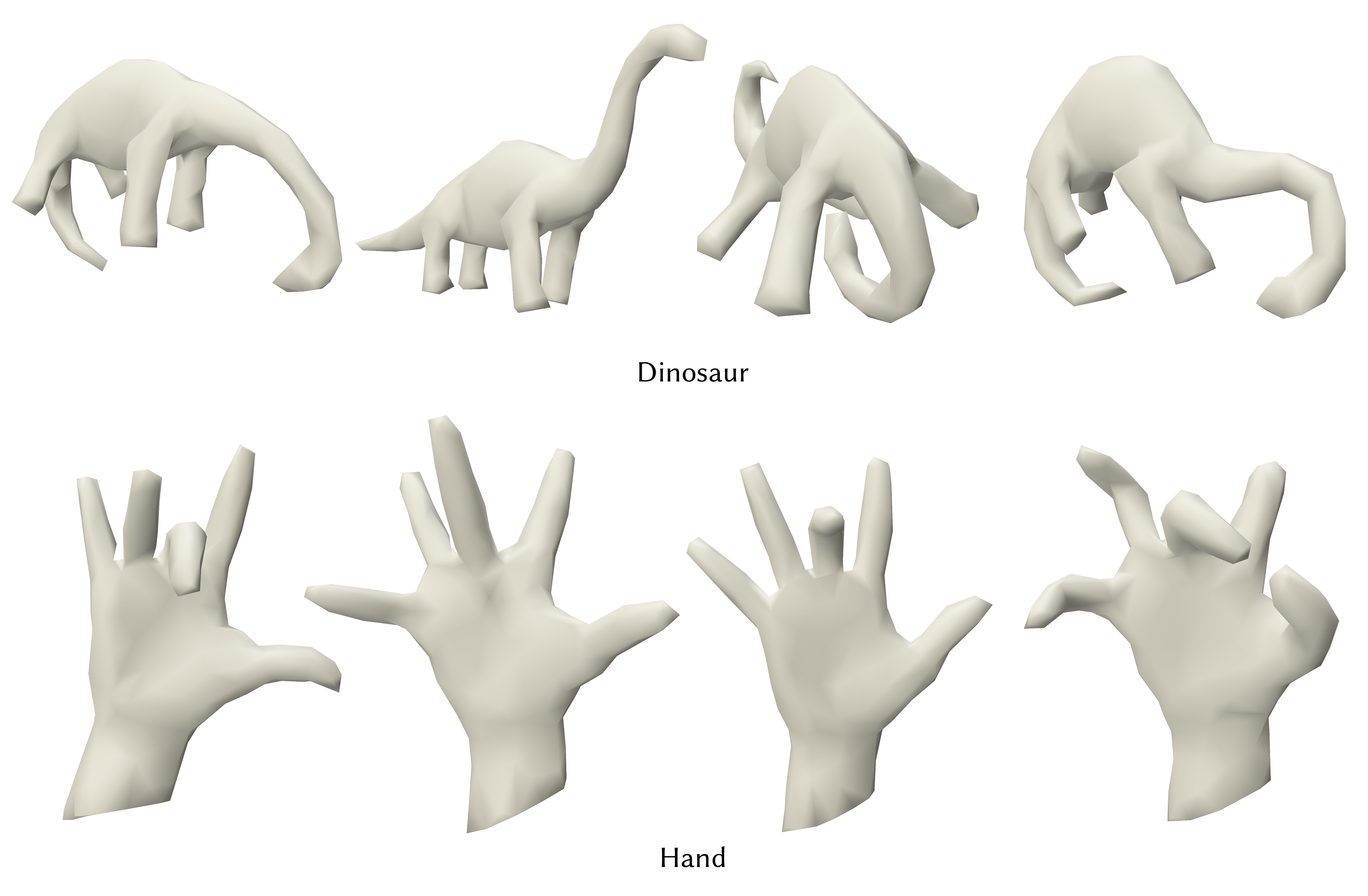}
    \Description{Two rows of shapes. The first row shows four deformed meshes of a dinosaur. The second row shows four deformed meshes of a hand.}
    \caption{Two example classes with four shapes each from the SHREC shape classification dataset.}
    \label{fig:classificationex}
\end{figure}

\paragraph{Shape classification}
We train an HSN to perform classification on the SHREC dataset \cite{lian2011}, consisting of 30 classes (\autoref{fig:classificationex}). We only train on 10 training samples per class. Like \cite{hanocka2019meshcnn}, we take multiple random samplings from the dataset to create these training sets and average over the results. We train for 50 epochs, compared to the 200 epochs used in previous works. \revised{This task is challenging due to the low number of training samples and labels. Therefore, we reduce the size of our network and consequently, the number of parameters to learn. We only use the first half of the U-ResNet architecture, with only one ResNet block per scale.} To obtain a classification, we retrieve the radial components from the last convolutional layer, followed by a global mean pool. The initial radius of our filters is $\epsilon=0.2$ and the number of rings in the radial profile is 6.

For our comparisons, we cite the results from \cite{hanocka2019meshcnn} and \cite{10.1111/cgf.13244}, comparing our method to MeshCNN \cite{hanocka2019meshcnn}, SG \cite{10.1145/1899404.1899405}, SN \cite{wu20153d}, GI \cite{10.1007/978-3-319-46466-4_14}, and GWCNN \cite{10.1111/cgf.13244}. \revised{Additionally, we train MDGCNN \cite{poulenard2018multi}, GCNN \cite{masci2015geodesic}, and ACNN \cite{boscaini2016learning} with the exact same architecture as HSN.} The results are outlined in \autoref{tab:shapeclassification}. HSN outperforms all previous methods, demonstrating the effectiveness of our approach, even for lower training times. \revised{One explanation for the large gap in performance is the low number of training samples. HSN uses fewer parameters, resulting in fewer required training samples. This is supported by results in \cite{worrall2017harmonic}, who also show higher performance for small datasets compared to other methods. The low number of samples also explains why some non-learning methods outperform learning methods. An additional problem faced by ACNN is the low quality of the meshes, resulting in a disparate principal curvature field. This obstructs the network from correctly relating features at neighboring locations and degrades performance. This same effect is observed when applying HSN without parallel transport, aligned to principal curvature directions (\autoref{tab:classification_m0m1}).}

\begin{table}[]
    \caption{Results for shape segmentation by HSN and related methods.}
    \label{tab:shapesegmentation}
    \begin{tabular}{lll}
    Method                           & \# Features             & Accuracy \\ \hline \hline
    \multicolumn{1}{l|}{HSN (ours)}         & \multicolumn{1}{l|}{3}  & \revised{91.14\%}   \\
    \hline
    \multicolumn{1}{l|}{MeshCNN}     & \multicolumn{1}{l|}{5}  & \textbf{92.30\%}  \\
    \multicolumn{1}{l|}{SNGC}        & \multicolumn{1}{l|}{3}  & 91.02\%  \\
    \multicolumn{1}{l|}{PointNet++}  & \multicolumn{1}{l|}{3}  & 90.77\%  \\
    \multicolumn{1}{l|}{MDGCNN}      & \multicolumn{1}{l|}{64} & 89.47\%  \\
    \multicolumn{1}{l|}{Toric Cover} & \multicolumn{1}{l|}{26} & 88.00\%  \\
    \multicolumn{1}{l|}{DynGraphCNN} & \multicolumn{1}{l|}{64} & 86.40\%  \\
    \multicolumn{1}{l|}{GCNN}        & \multicolumn{1}{l|}{64} & 86.40\%  \\
    \multicolumn{1}{l|}{\revised{ACNN}}       & \multicolumn{1}{l|}{3} & \revised{83.66}\% \\
    \end{tabular}
\end{table}

\begin{figure}[b]
    \includegraphics[width=\columnwidth]{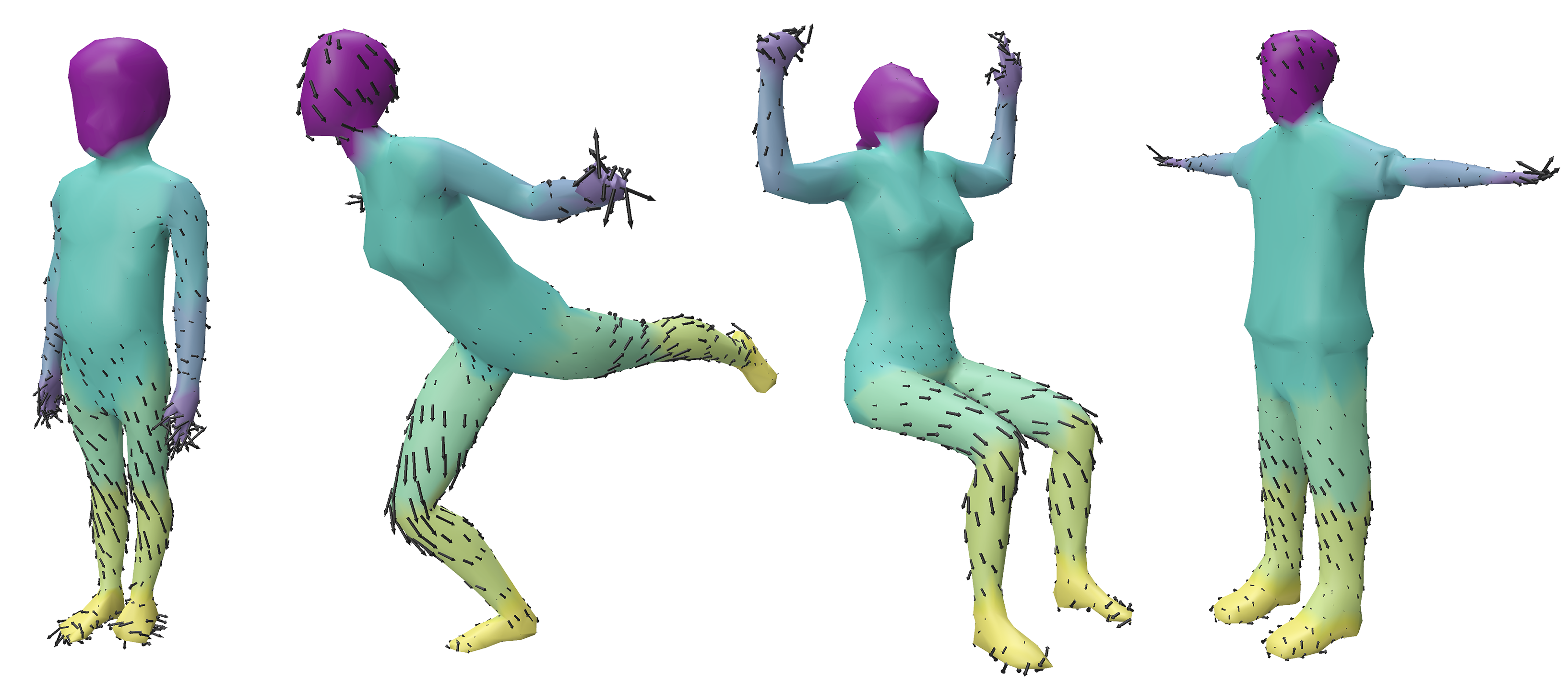}
    \Description{Four human shapes with coloring based on the body part. The meshes show a vector field with large vectors on the legs and hands.}
    \caption{Vector-valued featuremap and label predictions from our Harmonic Surface Network trained on shape segmentation.}
    \label{fig:hsnfeatures}
\end{figure}

\paragraph{Shape segmentation}
Next, we demonstrate HSN on shape segmentation. We train our network to predict a body-part annotation for each point on the mesh. We evaluate our method on the dataset proposed by Maron et al. \shortcite{maron2017convolutional}, which consists of shapes from FAUST ($n=100$) \cite{Bogo:CVPR:2014}, SCAPE ($n=71$) \cite{10.1145/1073204.1073207}, Adobe Mixamo ($n=41$) \cite{adobe_2016}, and MIT ($n=169$) \cite{10.1145/1360612.1360696} for training and shapes from the human category of SHREC07 ($n=18$) for testing. \revised{The variety in sources provides a variety in mesh structures as well, which tests HSN's robustness to changes in mesh structure.}

We use the U-ResNet architecture, providing xyz-coordinates as input and evaluate the class prediction directly from the final convolutional layer. The logarithmic map and parallel transport are computed using the original meshes from \cite{maron2017convolutional}. To limit the training time, 1024 vertices are sampled on each mesh using farthest point sampling to be used in training and testing. This type of downsampling is also applied by \cite{hanocka2019meshcnn} (to 750 vertices) for similar reasons. The initial \revised{support} of our \revised{kernels} is $\epsilon=0.2$ and the number of rings in the radial profile is 6.

We report the accuracy as the fraction of vertices that was classified correctly across all test shapes. For the comparisons, we cite the results from \cite{poulenard2018multi}, \cite{DBLP:journals/corr/abs-1812-10705} and \cite{hanocka2019meshcnn}. HSN produces competitive results compared to state-of-the-art methods, performing only slightly worse than MeshCNN.

To understand the complex features learned by the network, we visualize the features on the model, alongside our predictions for segmentation. We can interpret the complex features as intrinsic vectors on the surface and visualize them as such using Polyscope \cite{polyscope}. \autoref{fig:hsnfeatures} shows a single feature in the $M=1$ stream from the second-to-last layer. We observe high activations on certain bodyparts (legs, hands) and note that the intrinsic vectors are aligned. Our interpretation is that the corresponding filter `detects' legs and hands. The alignment of features is beneficial to the propagation of these activations through the network; if features are oriented in opposing directions, they can cancel each other out when summed.

\begin{figure}[t]
    \centering
    \includegraphics[width=\columnwidth]{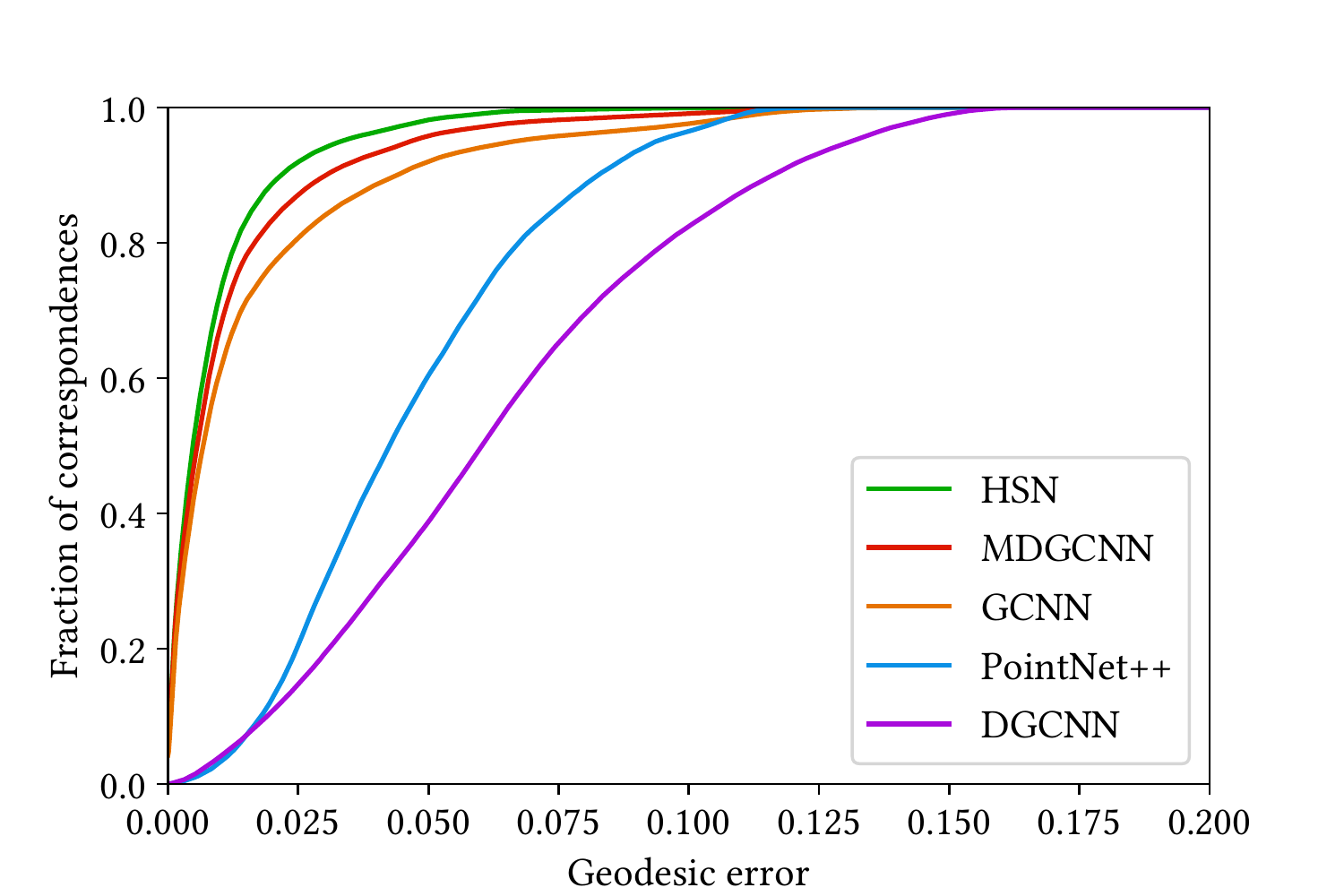}
    \Description{Five curves, all starting with a fraction of correct correspondences near 0. HSN, MDGCNN, GCNN quickly rise to about 0.9 at 0.025 geodesic error and slowly converge to 1. HSN has the highest curve, signifying better performance. PointNet++ and DGCNN have S-like curves that converge to 1 at 0.1 and 0.15 geodesic error, respectively.}
    \caption{Fraction of correspondences for a given geodesic error on the remeshed FAUST dataset using HSN (ours), MDGCNN, GCNN, PointNet++, and DGCNN.}
    \label{fig:accuracy_remeshed}
\end{figure}

\paragraph{Correspondence}
Correspondence finds matching points between two similar shapes. We evaluate correspondence on the widely used FAUST dataset \cite{Bogo:CVPR:2014}. FAUST consists of 100 scans of human bodies in 10 different poses with ground-truth correspondences. We set up the network to predict the index of the corresponding vertex on the ground-truth shape. To this end, we add two fully connected layers (FC256, FC$N_{\text{vert}}$) after the U-ResNet architecture. The initial radius of our filters is $\epsilon=0.1$ and the number of rings in the radial profile is 2.

\begin{figure}[t]
    \centering
    \includegraphics[width=\columnwidth]{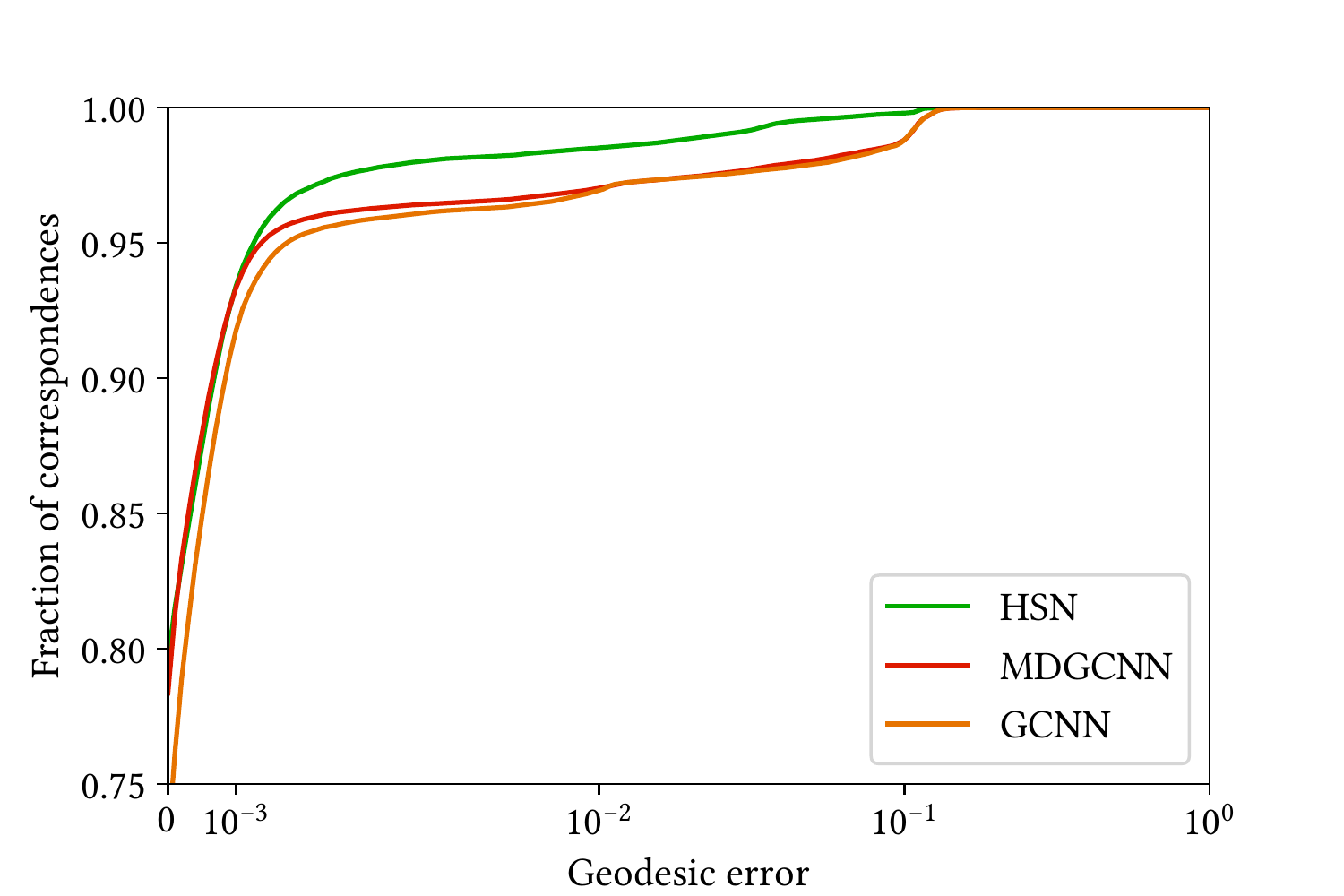}
    \Description{Three curves, starting with a fraction of correct correspondences near 0.8. HSN and MDGCNN follow a similar trajectory until about 0.001 geodesic error. HSN rises above MDGCNN and GCNN to about 0.98 and slowly converges to 1.}
    \caption{Fraction of correspondences for a given geodesic error on the original FAUST dataset using HSN (ours), MDGCNN, and GCNN. Note: the x axis is set to logarithmic scale.}
    \label{fig:accuracy_original}
\end{figure}

\begin{figure}[b]
    \includegraphics[width=\columnwidth]{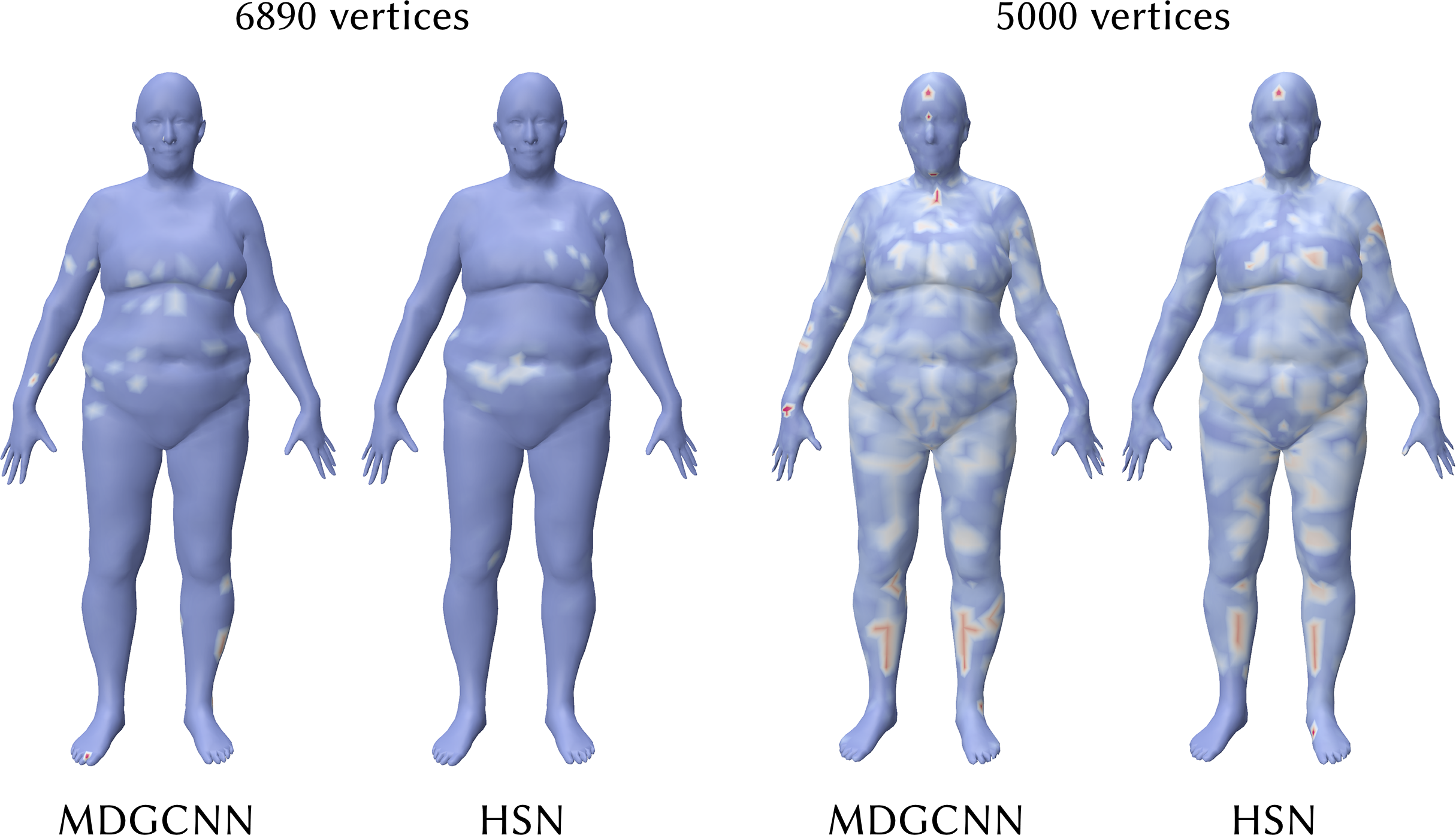}
    \Description{Four meshes of human bodies with error heatmaps. The first two shapes show results for MDGCNN and HSN on the original FAUST dataset, the last two shapes show results for MDGCNN and HSN on the remeshed FAUST dataset. The heatmaps for HSN show fewer highlighted areas, signifying smaller errors.}
    \caption{Geodesic error visualised on the test shapes, shown: the first test shape for MDGCNN and HSN, for both the original and remeshed dataset.}
    \label{fig:errormap}
\end{figure}

We train on the first 80 shapes and report the fraction of correct correspondences for a given geodesic error on the last 20 models (Figures \ref{fig:accuracy_remeshed} and \ref{fig:accuracy_original}). As input to our network, we provide 64-dimensional SHOT descriptors, computed for $12\%$ of the shape area. Similar to Poulenard and Ovsjanikov \shortcite{poulenard2018multi}, we train our network on a remeshed version of the FAUST dataset as well, since the original FAUST dataset exhibits the same connectivity between shapes. The remeshed dataset is a challenge that is more representative of real applications, where we cannot make any assumptions about the connectivity or discretization of our input. Having recreated the same architecture, with the same input features, we can fairly compare our method to MDGCNN and report the results obtained by Poulenard and Ovsjanikov \shortcite{poulenard2018multi}.

The results in Figures \ref{fig:accuracy_remeshed}, \ref{fig:accuracy_original}, and \ref{fig:errormap} show that HSN outperforms the compared state-of-the-art methods. More importantly, HSN improves existing results on the remeshed FAUST dataset, demonstrating the robustness of the method to changes in the discretization of the surface.

\paragraph{Parameter count and memory usage}
The following calculation shows that HSN achieves this performance using fewer parameters and with less impact on memory during computation. Let $n_i$ and $n_o$ be the number of input and output features, $n_{\rho}$ and $n_{\theta}$ the number of rings and angles on the polar grid, and $n_m$ the number of rotation order streams. MDGCNN \cite{poulenard2018multi} and GCNN \cite{masci2015geodesic} learn a weight matrix for every location on a polar grid, resulting in a parameter complexity of $O(n_i\,n_o\,n_{\rho}\,n_{\theta})$. HSN learns the same weight matrix, but only for the radial profile and the phase offset. We chose to learn these weights separately for every connection between streams, resulting in a parameter complexity of $O(n_i\,n_o\,(n_{\rho} + 1)\,n_m^2)$. If one chooses to learn weights per stream, which the original H-Nets opt for, this is reduced to $O(n_i\,n_o\,(n_{\rho} + 1)\,n_m)$. Removing the dependency on $n_{\theta}$ has a high impact on the number of parameters: for $n_{\rho} = 2$ and $n_{\theta} = 8$, HSN uses $75\%$ of the parameters used by MDGCNN \cite{poulenard2018multi} (122880 vs. 163840), only considering the convolutional layers. If we were to use the same number of rings and angles as used by GCNN \cite{masci2015geodesic}, $n_{\rho} = 5$ and $n_{\theta} = 16$, HSN would use only $30\%$ of the parameters used by other spatial methods.

Concerning space complexity, MDGCNN \cite{poulenard2018multi} stores the result from every rotation throughout the network, multiplying the memory complexity of the model by the number of directions used, which tends to be between 8 and 16. In comparison, HSN's complex-valued features only increase the memory consumption for storing the separate streams and complex features. For two streams, this increases the space complexity by a factor of 4. This is important for the ability to scale our method to higher-resolution shapes and larger datasets, necessary for use in applications.

\begin{figure}[b]
    \centering
    \begin{subfigure}[b]{0.18\columnwidth}
        \includegraphics[width=\textwidth]{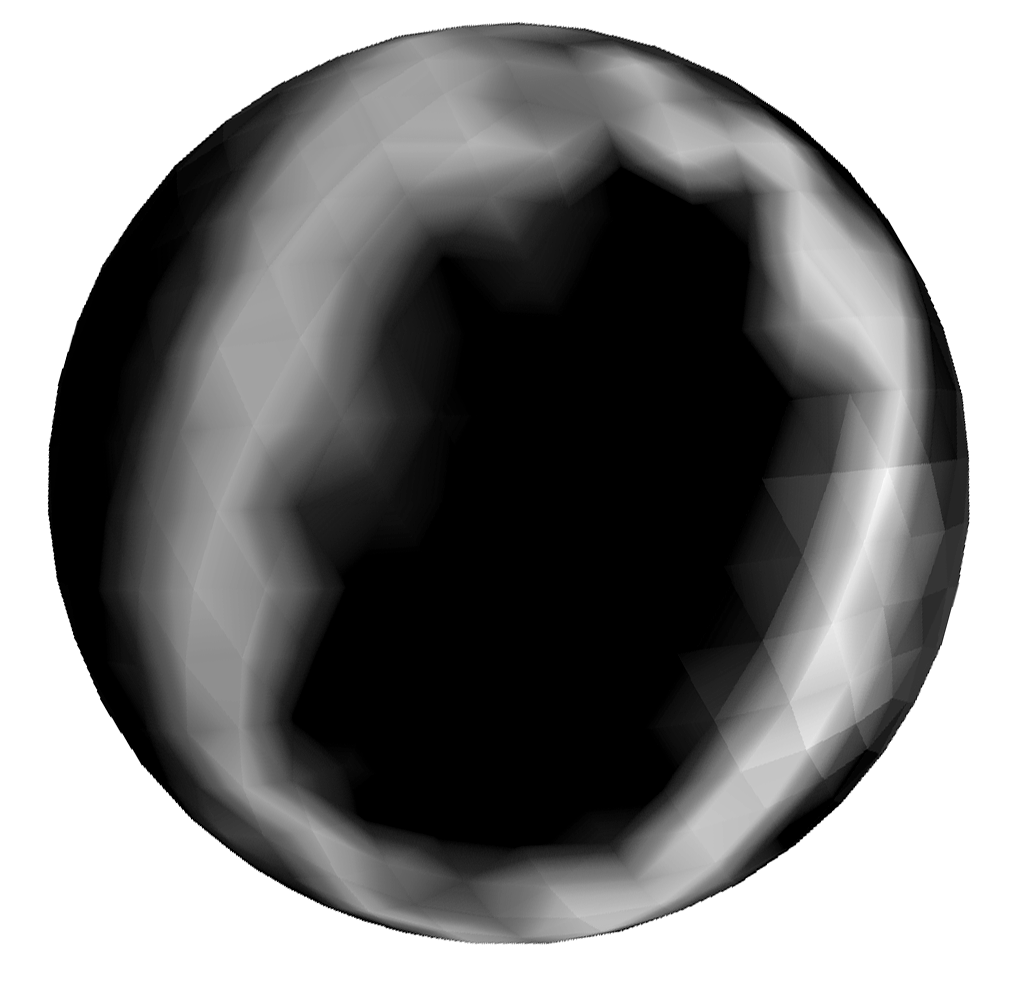}
    \end{subfigure}
    \begin{subfigure}[b]{0.18\columnwidth}
        \includegraphics[width=\textwidth]{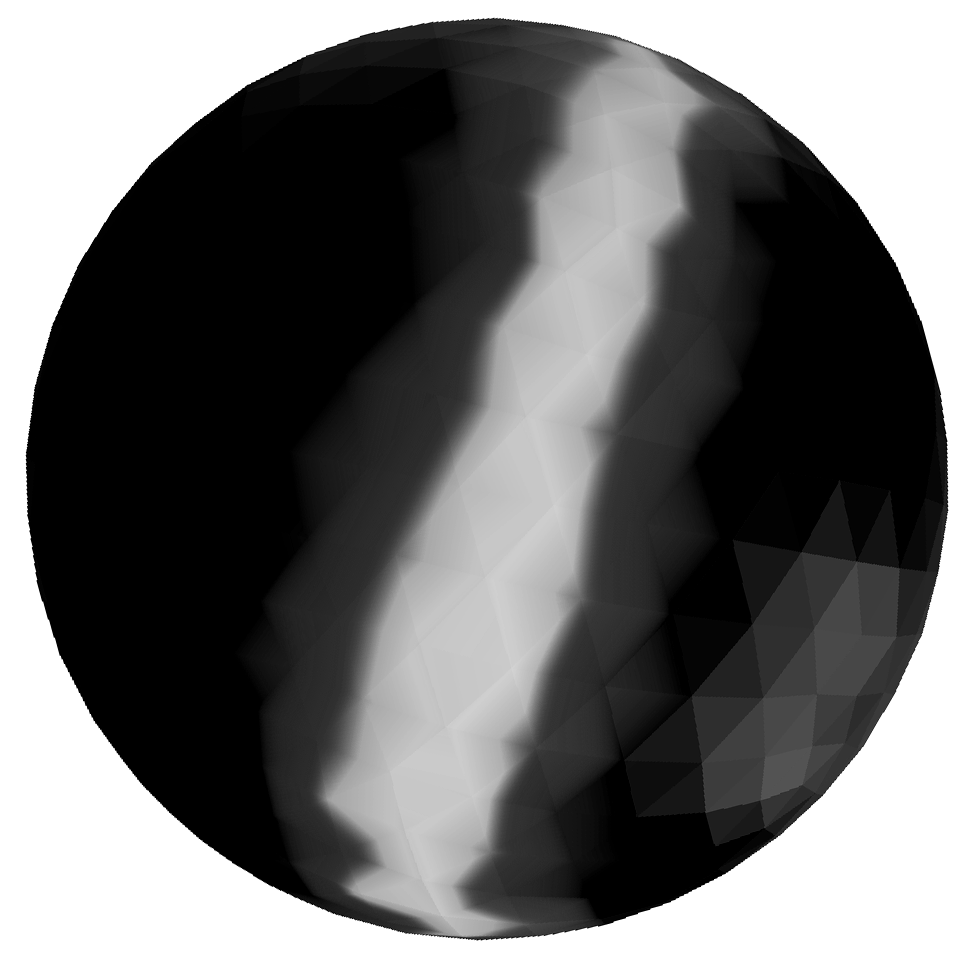}
    \end{subfigure}
    \begin{subfigure}[b]{0.18\columnwidth}
        \includegraphics[width=\textwidth]{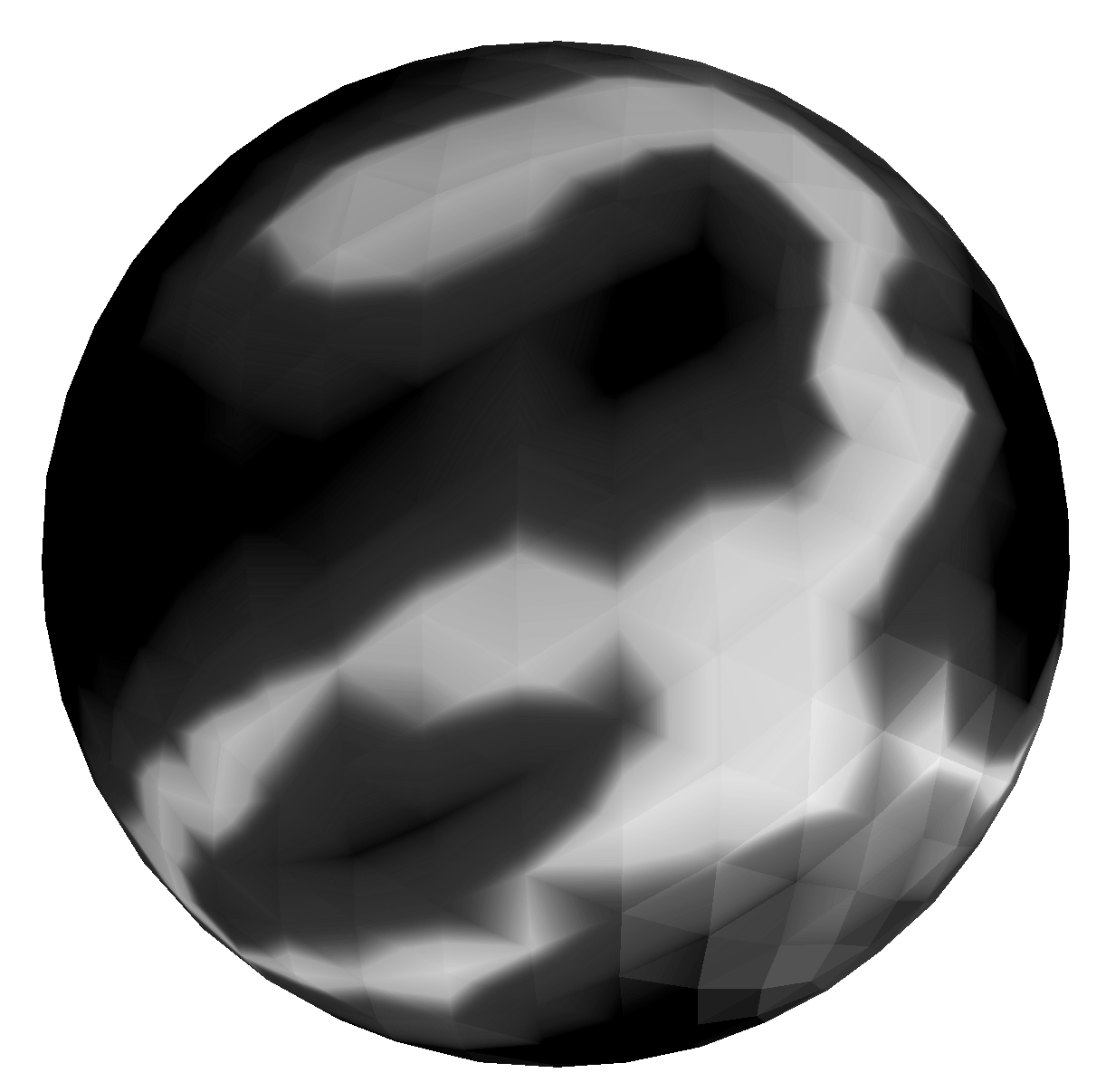}
    \end{subfigure}
    \begin{subfigure}[b]{0.18\columnwidth}
        \includegraphics[width=\textwidth]{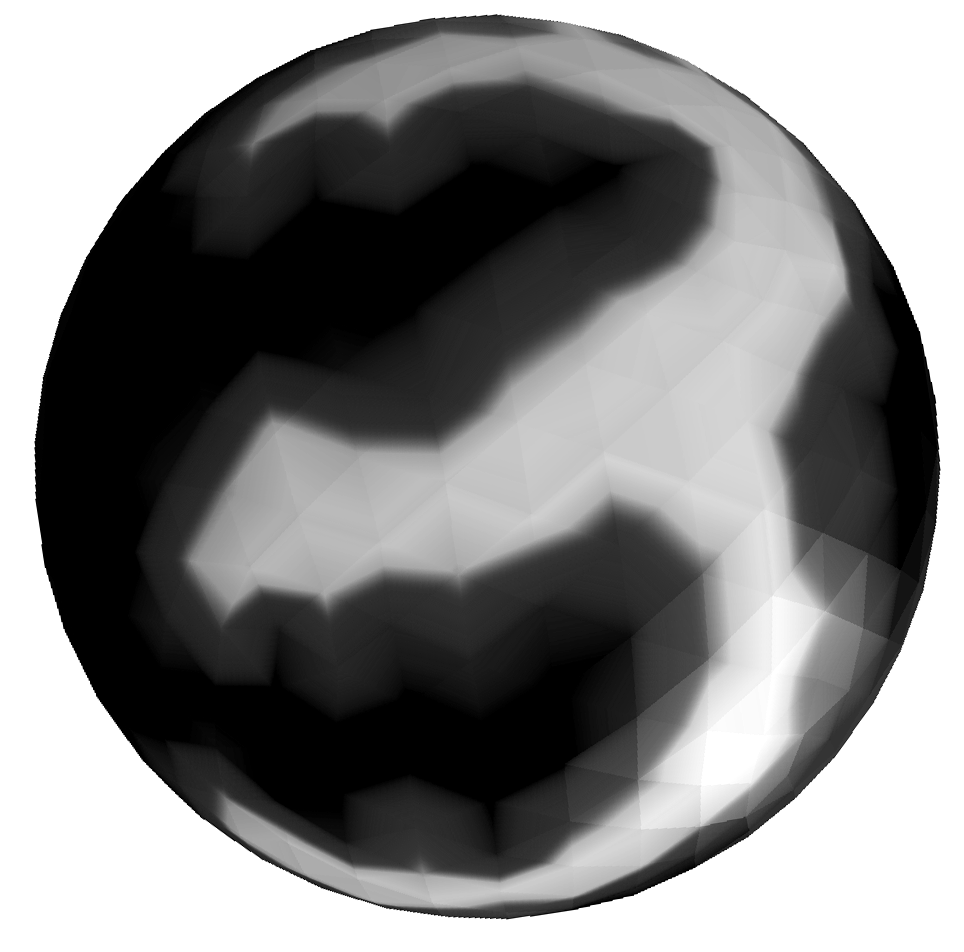}
    \end{subfigure}
    \begin{subfigure}[b]{0.18\columnwidth}
        \includegraphics[width=\textwidth]{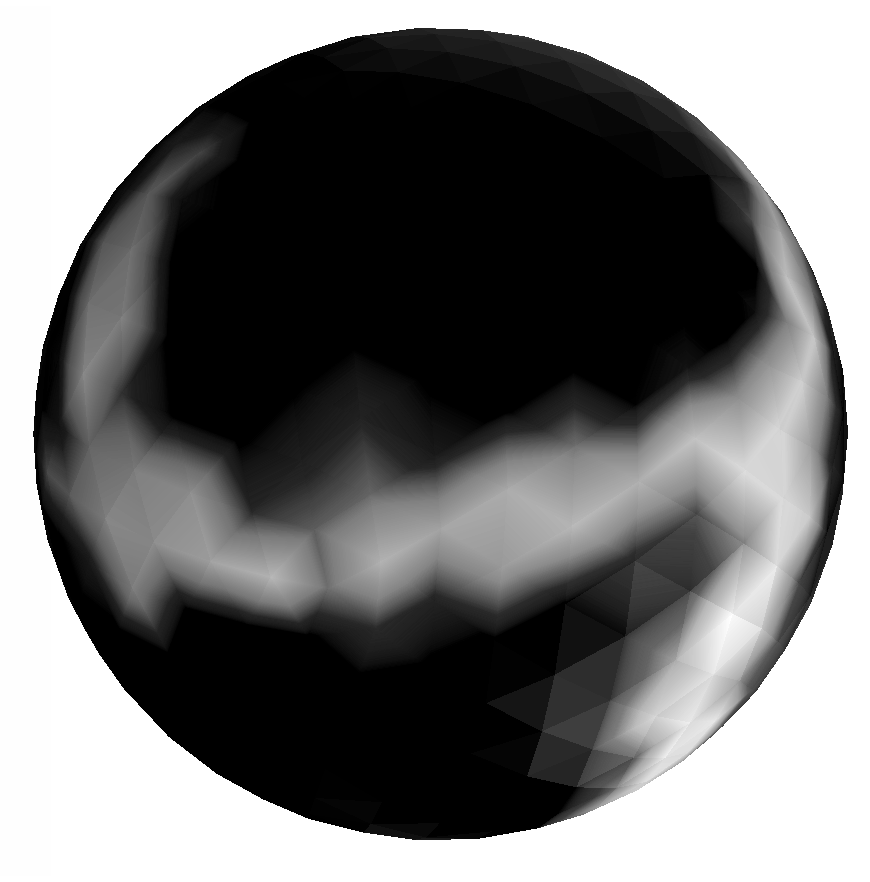}
    \end{subfigure}
    \Description{Five spheres, textured with numbers from MNIST.}
    \caption{Rotated MNIST mapped to a sphere}\label{fig:mnistsphere}
\end{figure}

\subsection{Evaluation}
Aside from comparisons with other methods, we intend to get a deeper understanding of the properties of HSN; specifically, the benefit of the $M=1$ stream w.r.t. rotation-invariant methods and a single $M=0$ stream.

\begin{figure}[t]
    \centering
    \includegraphics[width=\columnwidth]{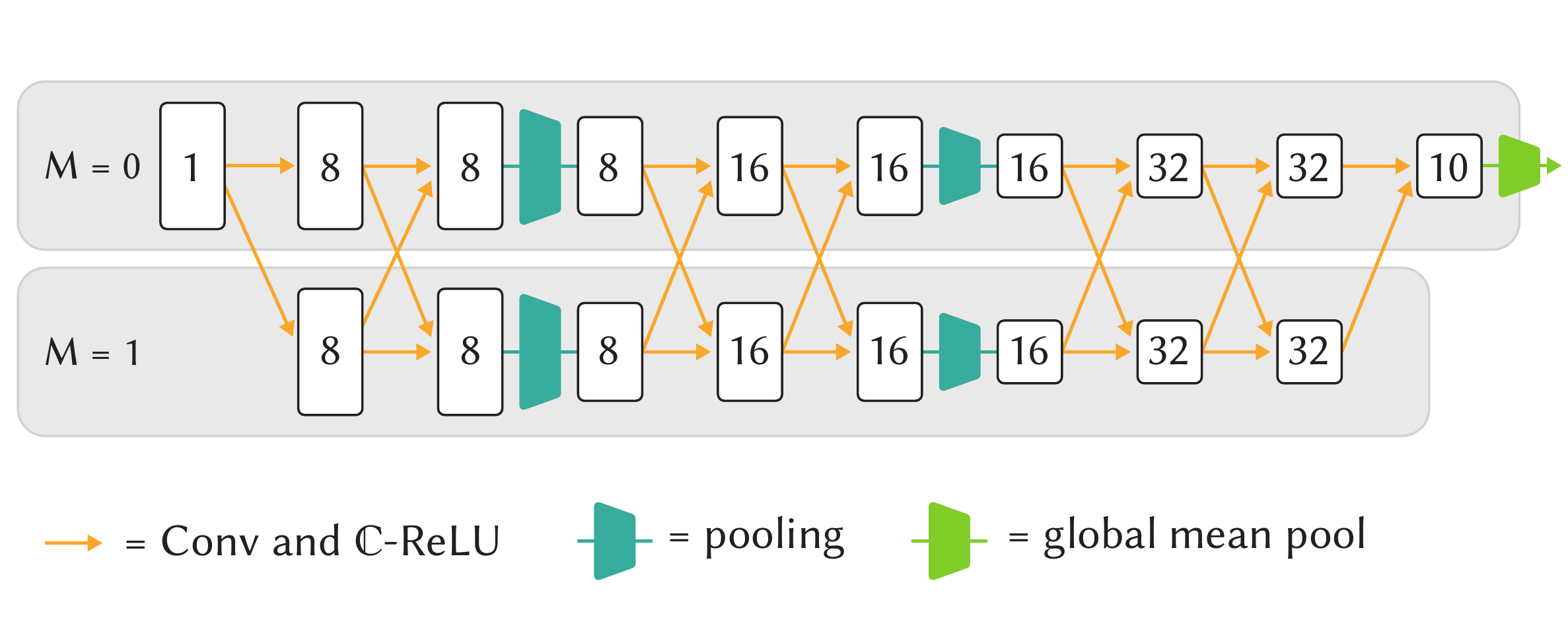}
    \Description{A schematic representation of a neural network architecture.}
    \caption{Architecture for classification of Rotated MNIST.}
    \label{fig:rmnistarch}
\end{figure}

To evaluate the benefit of different rotation order streams in our method, we compare two different stream configurations of our method: a single-stream architecture ($M=0$) and a double-stream architecture ($M=0$ and $M=1$). We limit this experiment to only these two configurations, because (1) experiments from \cite{worrall2017harmonic} demonstrate that rotation order streams of $M>1$ do not significantly improve the performance of the network and (2) the $M=0$ and $M=1$ stream features have an intuitive interpretation on the surface as scalar values and intrinsic vectors, respectively.

We evaluate the two configurations on a new task: classification of digits from the Rotated MNIST mapped to a sphere, as well as the shape segmentation and classification tasks from the comparison experiments.

\paragraph{Rotated MNIST on a sphere.} We use an elliptical mapping \cite{fong2015analytical} to map the grayscale values from the images of the Rotated MNIST dataset \cite{larochelle2007empirical} to the vertices of a unit sphere with 642 vertices. The Rotated MNIST dataset consists of 10000 randomly rotated training samples, 2000 validation samples, and 50000 test images separated in 10 classes. We use an architecture similar to the one in \cite{worrall2017harmonic}: Conv8, Conv8, Pool0.5, Conv16, Conv16, Pool0.25, Conv32, Conv32, Conv10 (\autoref{fig:rmnistarch}). The kernel supports for each scale are the following: 0.3, 0.45, 0.8 (the geodesic diameter of the unit sphere is $\pi$).

The two stream architecture demonstrates significant improvement over the single-stream architecture with a higher parameter count to compensate for using only one stream: 94.10\% vs. 75.57\% (\autoref{tab:mnistsphere_results}). This affirms the benefit of rotation-equivariant streams for learning signals on surfaces.

\begin{table}[t]
\caption{Results of HSN tested on Rotated MNIST mapped to a sphere for a single- and double-stream configuration.}
\label{tab:mnistsphere_results}
\begin{tabular}{lll}
Method                           & Streams ($M=\ldots$)             & Accuracy \\ \hline \hline
\multicolumn{1}{l|}{HSN}     & \multicolumn{1}{l|}{0, 1}  & \textbf{94.10\%} \\
\hline
\multicolumn{1}{l|}{HSN}         & \multicolumn{1}{l|}{0}  & \revised{70.68}\% \\
\multicolumn{1}{l|}{HSN (parameters x4)}         & \multicolumn{1}{l|}{0}  & \revised{75.57}\%   \\
\end{tabular}
\end{table}

\paragraph{Shape Segmentation and Classification}
\revised{We repeat our experiments for shape segmentation and classification in four different configurations: the double-stream configuration used in section 5.2; a single-stream configuration; a single-stream configuration with four times the parameters; and the double stream configuration aligned to a smoothed principal curvature field, instead of local alignment with parallel transport. The single-stream configurations aim to provide insight into the benefit of the rotation-equivariant stream and to rule out the sheer increase in parameters as the cause of this performance boost. The last configuration shows the benefit of locally aligning rotation-equivariant features over aligning kernels to smoothed principal curvature fields, as is done by ACNN \cite{boscaini2016learning}.}

\revised{The results in \autoref{tab:shapeseg_m0m1} and \autoref{tab:classification_m0m1} show that the double-stream architecture improves the performance from satisfactory to competitive. Furthermore, an increase in parameters has a negative impact on performance, likely due to the relatively low number of training samples. This becomes even more apparent when comparing the learning curves for each configuration in \autoref{fig:shapeseg_m0m1}: the double-stream configuration is more stable and performs better than both single-stream configurations. These results support the benefit of the rotation-equivariant stream.

Finally, we find that HSN performs significantly better when using parallel transport than when aligned with a smoothed principal curvature direction (pc aligned): for shape segmentation, the benefit introduced by the $M=1$ stream is diminished, and for shape classification, we observe a large drop in performance, likely induced by the coarseness of the meshes and the resulting low-quality principal curvature fields.}

\begin{table}[t]
    \caption{Results of HSN tested on shape segmentation for multiple configurations.}
    \label{tab:shapeseg_m0m1}
    \begin{tabular}{lll}
    Method                           & Streams ($M=\ldots$)             & Accuracy \\ \hline \hline
    \multicolumn{1}{l|}{HSN}     & \multicolumn{1}{l|}{0, 1}  & \textbf{\revised{91.14}\%} \\
    \hline
    \multicolumn{1}{l|}{\revised{HSN}}         & \multicolumn{1}{l|}{0}  & \revised{88.74}\%   \\
    \multicolumn{1}{l|}{HSN (parameters $\times 4$)}         & \multicolumn{1}{l|}{0}  & \revised{87.25}\%   \\
    \multicolumn{1}{l|}{HSN (pc aligned)}     & \multicolumn{1}{l|}{0, 1}  & \revised{86.22}\%
    \end{tabular}
\end{table}

\begin{figure}[t]
    \centering
    \includegraphics[width=\columnwidth]{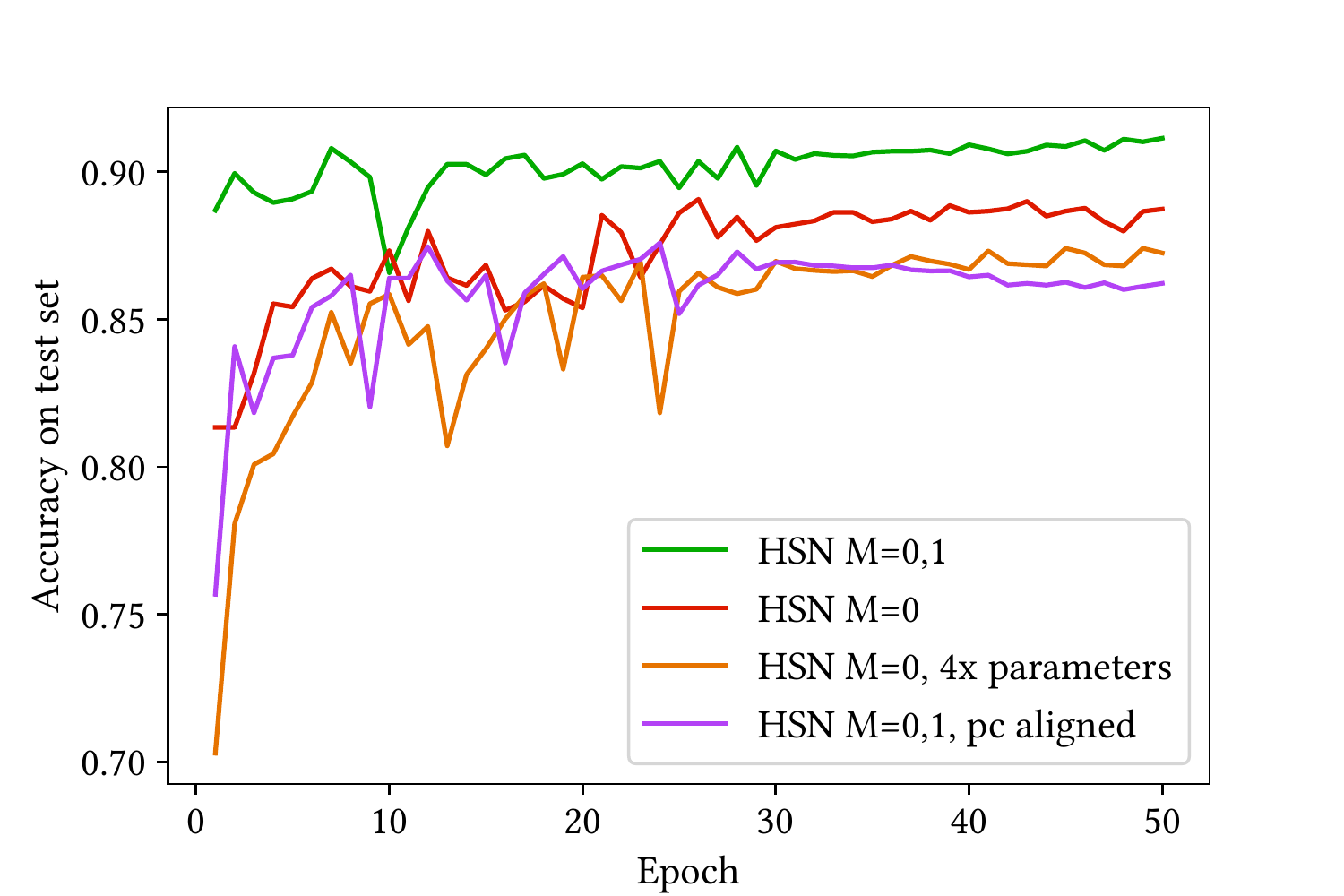}
    \Description{A graph showing four curves. The curve for HSN with two streams starts at 0.88 and remains high, converging to 91.14. The other three curves start lower and are more wobbly.}
    \caption{Validation accuracy per training epoch several configurations of HSN on shape segmentation.}
    \label{fig:shapeseg_m0m1}
\end{figure}


\section{Conclusion}
We introduce Harmonic Surface Networks, an approach for deep learning on surfaces operating on vector-valued, rotation-equivariant features. This is achieved by learning circular harmonic kernels and separating features in streams of different equivariance classes. The advantage of our approach is that the rotational degree of freedom, arising when a filter kernel is transported along a surface, has no effect on the network. The filters can be evaluated in arbitrarily chosen coordinate systems. Due to the rotation-equivariance of the filters, changes in the coordinate system can be recovered after the convolutions have been computed by transforming the results of the convolution. The convolution operator uses this property and always locally aligns the features.

\begin{table}[t]
    \caption{Results of HSN tested on classification for multiple configurations.}
    \label{tab:classification_m0m1}
    \begin{tabular}{lll}
    Method                           & Streams ($M=\ldots$)             & Accuracy \\ \hline \hline
    \multicolumn{1}{l|}{HSN}     & \multicolumn{1}{l|}{0, 1}  & \textbf{\revised{96.1}\%} \\
    \hline
    \multicolumn{1}{l|}{\revised{HSN}}         & \multicolumn{1}{l|}{0}  & \revised{86.1}\%   \\
    \multicolumn{1}{l|}{HSN (pc aligned)}     & \multicolumn{1}{l|}{0, 1}  & \revised{49.7}\%
    \end{tabular}
\end{table}

We implement this concept for triangle meshes and develop convolution filters that have the desired equivariance properties at the discrete level and allow for separating learning of parameters in the radial and angular direction. We demonstrated in our comparisons that HSNs are able to produce competitive or better results with respect to state-of-the-art approaches and analyzed the benefits of rotation-equivariant streams as an addition to rotation-invariant streams and parallel transport for local alignment compared to global alignment.

While we only implemented our method for meshes, every component of HSNs can be transferred to point clouds. We aim to extend our implementation and evaluate the benefits of HSNs for learning on point clouds.

Another promising direction is the application of outputs from the rotation-equivariant stream. We expect that the ability to learn intrinsic vectors on a surface can facilitate new applications in graphics and animation.
\begin{acks}
We would like to thank Nicholas Sharp and Matthias Fey for their work on \textit{Geometry Central} and \textit{PyTorch Geometric}, Adrian Poulenard for sharing results and insights on MDGCNN, Thomas Kipf and Taco Cohen for helpful discussions on geometric deep learning and group-equivariant learning, and the anonymous reviewers for their constructive comments and suggestions. This work has been supported by Stichting Universiteitsfonds Delft and a gift from Google Cloud.
\end{acks}

\bibliographystyle{ACM-Reference-Format}
\bibliography{bibliography}


\begin{thebibliography}{66}


\ifx \showCODEN    \undefined \def \showCODEN     #1{\unskip}     \fi
\ifx \showDOI      \undefined \def \showDOI       #1{#1}\fi
\ifx \showISBNx    \undefined \def \showISBNx     #1{\unskip}     \fi
\ifx \showISBNxiii \undefined \def \showISBNxiii  #1{\unskip}     \fi
\ifx \showISSN     \undefined \def \showISSN      #1{\unskip}     \fi
\ifx \showLCCN     \undefined \def \showLCCN      #1{\unskip}     \fi
\ifx \shownote     \undefined \def \shownote      #1{#1}          \fi
\ifx \showarticletitle \undefined \def \showarticletitle #1{#1}   \fi
\ifx \showURL      \undefined \def \showURL       {\relax}        \fi
\providecommand\bibfield[2]{#2}
\providecommand\bibinfo[2]{#2}
\providecommand\natexlab[1]{#1}
\providecommand\showeprint[2][]{arXiv:#2}

\bibitem[\protect\citeauthoryear{Adobe}{Adobe}{2016}]%
        {adobe_2016}
\bibfield{author}{\bibinfo{person}{Adobe}.} \bibinfo{year}{2016}\natexlab{}.
\newblock \bibinfo{title}{Adobe Mixamo 3{D} characters}.
\newblock
\newblock
\newblock
\shownote{www.mixamo.com.}


\bibitem[\protect\citeauthoryear{Anguelov, Srinivasan, Koller, Thrun, Rodgers,
  and Davis}{Anguelov et~al\mbox{.}}{2005}]%
        {10.1145/1073204.1073207}
\bibfield{author}{\bibinfo{person}{Dragomir Anguelov}, \bibinfo{person}{Praveen
  Srinivasan}, \bibinfo{person}{Daphne Koller}, \bibinfo{person}{Sebastian
  Thrun}, \bibinfo{person}{Jim Rodgers}, {and} \bibinfo{person}{James Davis}.}
  \bibinfo{year}{2005}\natexlab{}.
\newblock \showarticletitle{SCAPE: Shape Completion and Animation of People}.
\newblock \bibinfo{journal}{\emph{ACM Trans. Graph.}} \bibinfo{volume}{24},
  \bibinfo{number}{3} (\bibinfo{year}{2005}), \bibinfo{pages}{408–416}.
\newblock


\bibitem[\protect\citeauthoryear{Bogo, Romero, Loper, and Black}{Bogo
  et~al\mbox{.}}{2014}]%
        {Bogo:CVPR:2014}
\bibfield{author}{\bibinfo{person}{Federica Bogo}, \bibinfo{person}{Javier
  Romero}, \bibinfo{person}{Matthew Loper}, {and} \bibinfo{person}{Michael~J.
  Black}.} \bibinfo{year}{2014}\natexlab{}.
\newblock \showarticletitle{{FAUST}: Dataset and evaluation for {3D} mesh
  registration}. In \bibinfo{booktitle}{\emph{{CVPR}}}.
  \bibinfo{publisher}{{IEEE}}.
\newblock


\bibitem[\protect\citeauthoryear{Boscaini, Masci, Rodol{\`a}, and
  Bronstein}{Boscaini et~al\mbox{.}}{2016}]%
        {boscaini2016learning}
\bibfield{author}{\bibinfo{person}{Davide Boscaini}, \bibinfo{person}{Jonathan
  Masci}, \bibinfo{person}{Emanuele Rodol{\`a}}, {and} \bibinfo{person}{Michael
  Bronstein}.} \bibinfo{year}{2016}\natexlab{}.
\newblock \showarticletitle{Learning shape correspondence with anisotropic
  convolutional neural networks}. In \bibinfo{booktitle}{\emph{NeurIPS}}.
  \bibinfo{pages}{3189--3197}.
\newblock


\bibitem[\protect\citeauthoryear{Bronstein, Bronstein, Guibas, and
  Ovsjanikov}{Bronstein et~al\mbox{.}}{2011}]%
        {10.1145/1899404.1899405}
\bibfield{author}{\bibinfo{person}{Alexander~M. Bronstein},
  \bibinfo{person}{Michael~M. Bronstein}, \bibinfo{person}{Leonidas~J. Guibas},
  {and} \bibinfo{person}{Maks Ovsjanikov}.} \bibinfo{year}{2011}\natexlab{}.
\newblock \showarticletitle{Shape {G}oogle: Geometric Words and Expressions for
  Invariant Shape Retrieval}.
\newblock \bibinfo{journal}{\emph{ACM Trans. Graph.}} \bibinfo{volume}{30},
  \bibinfo{number}{1} (\bibinfo{year}{2011}).
\newblock


\bibitem[\protect\citeauthoryear{Bronstein, Bruna, LeCun, Szlam, and
  Vandergheynst}{Bronstein et~al\mbox{.}}{2017}]%
        {bronstein2017geometric}
\bibfield{author}{\bibinfo{person}{Michael~M Bronstein}, \bibinfo{person}{Joan
  Bruna}, \bibinfo{person}{Yann LeCun}, \bibinfo{person}{Arthur Szlam}, {and}
  \bibinfo{person}{Pierre Vandergheynst}.} \bibinfo{year}{2017}\natexlab{}.
\newblock \showarticletitle{Geometric deep learning: going beyond {E}uclidean
  data}.
\newblock \bibinfo{journal}{\emph{IEEE Signal Processing Magazine}}
  \bibinfo{volume}{34}, \bibinfo{number}{4} (\bibinfo{year}{2017}),
  \bibinfo{pages}{18--42}.
\newblock


\bibitem[\protect\citeauthoryear{Bruna, Zaremba, Szlam, and Lecun}{Bruna
  et~al\mbox{.}}{2014}]%
        {bruna2013spectral}
\bibfield{author}{\bibinfo{person}{Joan Bruna}, \bibinfo{person}{Wojciech
  Zaremba}, \bibinfo{person}{Arthur Szlam}, {and} \bibinfo{person}{Yann
  Lecun}.} \bibinfo{year}{2014}\natexlab{}.
\newblock \showarticletitle{Spectral networks and locally connected networks on
  graphs}. In \bibinfo{booktitle}{\emph{{ICLR}}}.
\newblock


\bibitem[\protect\citeauthoryear{Cohen, Weiler, Kicanaoglu, and Welling}{Cohen
  et~al\mbox{.}}{2019}]%
        {Cohen2019}
\bibfield{author}{\bibinfo{person}{Taco Cohen}, \bibinfo{person}{Maurice
  Weiler}, \bibinfo{person}{Berkay Kicanaoglu}, {and} \bibinfo{person}{Max
  Welling}.} \bibinfo{year}{2019}\natexlab{}.
\newblock \showarticletitle{Gauge Equivariant Convolutional Networks and the
  Icosahedral {CNN}}. In \bibinfo{booktitle}{\emph{ICML}},
  Vol.~\bibinfo{volume}{97}. \bibinfo{pages}{1321--1330}.
\newblock


\bibitem[\protect\citeauthoryear{Cohen and Welling}{Cohen and Welling}{2016}]%
        {cohen2016group}
\bibfield{author}{\bibinfo{person}{Taco Cohen} {and} \bibinfo{person}{Max
  Welling}.} \bibinfo{year}{2016}\natexlab{}.
\newblock \showarticletitle{Group equivariant convolutional networks}. In
  \bibinfo{booktitle}{\emph{ICML}}. \bibinfo{pages}{2990--2999}.
\newblock


\bibitem[\protect\citeauthoryear{Cohen, Geiger, K{\"{o}}hler, and
  Welling}{Cohen et~al\mbox{.}}{2018}]%
        {Cohen2018}
\bibfield{author}{\bibinfo{person}{Taco~S. Cohen}, \bibinfo{person}{Mario
  Geiger}, \bibinfo{person}{Jonas K{\"{o}}hler}, {and} \bibinfo{person}{Max
  Welling}.} \bibinfo{year}{2018}\natexlab{}.
\newblock \showarticletitle{Spherical CNNs}. In
  \bibinfo{booktitle}{\emph{{ICLR}}}.
\newblock


\bibitem[\protect\citeauthoryear{Cohen and Welling}{Cohen and Welling}{2017}]%
        {cohen2016steerable}
\bibfield{author}{\bibinfo{person}{Taco~S. Cohen} {and} \bibinfo{person}{Max
  Welling}.} \bibinfo{year}{2017}\natexlab{}.
\newblock \showarticletitle{Steerable CNNs}. In
  \bibinfo{booktitle}{\emph{ICLR}}.
\newblock


\bibitem[\protect\citeauthoryear{Crane, Weischedel, and Wardetzky}{Crane
  et~al\mbox{.}}{2013}]%
        {crane2013geodesics}
\bibfield{author}{\bibinfo{person}{Keenan Crane}, \bibinfo{person}{Clarisse
  Weischedel}, {and} \bibinfo{person}{Max Wardetzky}.}
  \bibinfo{year}{2013}\natexlab{}.
\newblock \showarticletitle{Geodesics in heat: A new approach to computing
  distance based on heat flow}.
\newblock \bibinfo{journal}{\emph{ACM Trans. Graph.}} \bibinfo{volume}{32},
  \bibinfo{number}{5} (\bibinfo{year}{2013}), \bibinfo{pages}{152}.
\newblock


\bibitem[\protect\citeauthoryear{de~Haan, Weiler, Cohen, and Welling}{de~Haan
  et~al\mbox{.}}{2020}]%
        {haan2020gauge}
\bibfield{author}{\bibinfo{person}{Pim de Haan}, \bibinfo{person}{Maurice
  Weiler}, \bibinfo{person}{Taco Cohen}, {and} \bibinfo{person}{Max Welling}.}
  \bibinfo{year}{2020}\natexlab{}.
\newblock \bibinfo{title}{Gauge Equivariant Mesh CNNs: Anisotropic convolutions
  on geometric graphs}.
\newblock
\newblock
\showeprint[arxiv]{2003.05425}~[cs.LG]


\bibitem[\protect\citeauthoryear{Defferrard, Bresson, and
  Vandergheynst}{Defferrard et~al\mbox{.}}{2016}]%
        {defferrard2016convolutional}
\bibfield{author}{\bibinfo{person}{Micha{\"e}l Defferrard},
  \bibinfo{person}{Xavier Bresson}, {and} \bibinfo{person}{Pierre
  Vandergheynst}.} \bibinfo{year}{2016}\natexlab{}.
\newblock \showarticletitle{Convolutional neural networks on graphs with fast
  localized spectral filtering}. In \bibinfo{booktitle}{\emph{NeurIPS}}.
  \bibinfo{pages}{3844--3852}.
\newblock


\bibitem[\protect\citeauthoryear{Ezuz, Solomon, Kim, and Ben-Chen}{Ezuz
  et~al\mbox{.}}{2017}]%
        {10.1111/cgf.13244}
\bibfield{author}{\bibinfo{person}{Danielle Ezuz}, \bibinfo{person}{Justin
  Solomon}, \bibinfo{person}{Vladimir~G. Kim}, {and} \bibinfo{person}{Mirela
  Ben-Chen}.} \bibinfo{year}{2017}\natexlab{}.
\newblock \showarticletitle{GWCNN: A Metric Alignment Layer for Deep Shape
  Analysis}.
\newblock \bibinfo{journal}{\emph{Comput. Graph. Forum}} \bibinfo{volume}{36},
  \bibinfo{number}{5} (\bibinfo{year}{2017}), \bibinfo{pages}{49–57}.
\newblock


\bibitem[\protect\citeauthoryear{Fasel and Gatica-Perez}{Fasel and
  Gatica-Perez}{2006}]%
        {fasel2006rotation}
\bibfield{author}{\bibinfo{person}{Beat Fasel} {and} \bibinfo{person}{Daniel
  Gatica-Perez}.} \bibinfo{year}{2006}\natexlab{}.
\newblock \showarticletitle{Rotation-invariant neoperceptron}. In
  \bibinfo{booktitle}{\emph{ICPR}}, Vol.~\bibinfo{volume}{3}.
  \bibinfo{publisher}{IEEE}, \bibinfo{pages}{336--339}.
\newblock


\bibitem[\protect\citeauthoryear{Fey and Lenssen}{Fey and Lenssen}{2019}]%
        {FeyLenssen2019}
\bibfield{author}{\bibinfo{person}{Matthias Fey} {and} \bibinfo{person}{Jan~E.
  Lenssen}.} \bibinfo{year}{2019}\natexlab{}.
\newblock \showarticletitle{Fast Graph Representation Learning with {PyTorch
  Geometric}}. In \bibinfo{booktitle}{\emph{ICLR Workshop on Representation
  Learning on Graphs and Manifolds}}.
\newblock


\bibitem[\protect\citeauthoryear{Fey, Lenssen, Weichert, and M{\"{u}}ller}{Fey
  et~al\mbox{.}}{2018}]%
        {Fey2018}
\bibfield{author}{\bibinfo{person}{Matthias Fey}, \bibinfo{person}{Jan~Eric
  Lenssen}, \bibinfo{person}{Frank Weichert}, {and} \bibinfo{person}{Heinrich
  M{\"{u}}ller}.} \bibinfo{year}{2018}\natexlab{}.
\newblock \showarticletitle{SplineCNN: Fast Geometric Deep Learning With
  Continuous B-Spline Kernels}. In \bibinfo{booktitle}{\emph{{CVPR}}}.
  \bibinfo{publisher}{{IEEE}}, \bibinfo{pages}{869--877}.
\newblock


\bibitem[\protect\citeauthoryear{Fong}{Fong}{2015}]%
        {fong2015analytical}
\bibfield{author}{\bibinfo{person}{Chamberlain Fong}.}
  \bibinfo{year}{2015}\natexlab{}.
\newblock \bibinfo{title}{Analytical Methods for Squaring the Disc}.
\newblock
\newblock
\showeprint[arxiv]{1509.06344}~[math.HO]


\bibitem[\protect\citeauthoryear{Freeman and Adelson}{Freeman and
  Adelson}{1991}]%
        {freeman1991design}
\bibfield{author}{\bibinfo{person}{William~T. Freeman} {and}
  \bibinfo{person}{Edward~H Adelson}.} \bibinfo{year}{1991}\natexlab{}.
\newblock \showarticletitle{The design and use of steerable filters}.
\newblock \bibinfo{journal}{\emph{IEEE Transactions on Pattern Analysis \&
  Machine Intelligence}} (\bibinfo{year}{1991}), \bibinfo{pages}{891--906}.
\newblock


\bibitem[\protect\citeauthoryear{Guerrero, Kleiman, Ovsjanikov, and
  Mitra}{Guerrero et~al\mbox{.}}{2018}]%
        {Guerrero2018}
\bibfield{author}{\bibinfo{person}{Paul Guerrero}, \bibinfo{person}{Yanir
  Kleiman}, \bibinfo{person}{Maks Ovsjanikov}, {and} \bibinfo{person}{Niloy~J.
  Mitra}.} \bibinfo{year}{2018}\natexlab{}.
\newblock \showarticletitle{PCPNet Learning Local Shape Properties from Raw
  Point Clouds}.
\newblock \bibinfo{journal}{\emph{Comput. Graph. Forum}} \bibinfo{volume}{37},
  \bibinfo{number}{2} (\bibinfo{year}{2018}), \bibinfo{pages}{75--85}.
\newblock


\bibitem[\protect\citeauthoryear{Haim, Segol, Ben{-}Hamu, Maron, and
  Lipman}{Haim et~al\mbox{.}}{2019}]%
        {DBLP:journals/corr/abs-1812-10705}
\bibfield{author}{\bibinfo{person}{Niv Haim}, \bibinfo{person}{Nimrod Segol},
  \bibinfo{person}{Heli Ben{-}Hamu}, \bibinfo{person}{Haggai Maron}, {and}
  \bibinfo{person}{Yaron Lipman}.} \bibinfo{year}{2019}\natexlab{}.
\newblock \showarticletitle{Surface Networks via General Covers}. In
  \bibinfo{booktitle}{\emph{{ICCV}}}. \bibinfo{publisher}{{IEEE}},
  \bibinfo{pages}{632--641}.
\newblock


\bibitem[\protect\citeauthoryear{Hanocka, Hertz, Fish, Giryes, Fleishman, and
  Cohen-Or}{Hanocka et~al\mbox{.}}{2019}]%
        {hanocka2019meshcnn}
\bibfield{author}{\bibinfo{person}{Rana Hanocka}, \bibinfo{person}{Amir Hertz},
  \bibinfo{person}{Noa Fish}, \bibinfo{person}{Raja Giryes},
  \bibinfo{person}{Shachar Fleishman}, {and} \bibinfo{person}{Daniel
  Cohen-Or}.} \bibinfo{year}{2019}\natexlab{}.
\newblock \showarticletitle{MeshCNN: A Network with an Edge}.
\newblock \bibinfo{journal}{\emph{ACM Trans. Graph.}} \bibinfo{volume}{38},
  \bibinfo{number}{4} (\bibinfo{year}{2019}), \bibinfo{pages}{90}.
\newblock


\bibitem[\protect\citeauthoryear{Herholz and Alexa}{Herholz and Alexa}{2019}]%
        {herholzefficient}
\bibfield{author}{\bibinfo{person}{Philipp Herholz} {and} \bibinfo{person}{Marc
  Alexa}.} \bibinfo{year}{2019}\natexlab{}.
\newblock \showarticletitle{Efficient Computation of Smoothed Exponential
  Maps}.
\newblock \bibinfo{journal}{\emph{Comput. Graph. Forum}} \bibinfo{volume}{38},
  \bibinfo{number}{6} (\bibinfo{year}{2019}), \bibinfo{pages}{79--90}.
\newblock


\bibitem[\protect\citeauthoryear{Hinton, Krizhevsky, and Wang}{Hinton
  et~al\mbox{.}}{2011}]%
        {hinton2011transforming}
\bibfield{author}{\bibinfo{person}{Geoffrey~E Hinton}, \bibinfo{person}{Alex
  Krizhevsky}, {and} \bibinfo{person}{Sida~D Wang}.}
  \bibinfo{year}{2011}\natexlab{}.
\newblock \showarticletitle{Transforming auto-encoders}. In
  \bibinfo{booktitle}{\emph{International Conference on Artificial Neural
  Networks}}. Springer, \bibinfo{pages}{44--51}.
\newblock


\bibitem[\protect\citeauthoryear{Huang, Zhang, Yi, Funkhouser, Nie{\ss}ner, and
  Guibas}{Huang et~al\mbox{.}}{2019}]%
        {Huang19}
\bibfield{author}{\bibinfo{person}{Jingwei Huang}, \bibinfo{person}{Haotian
  Zhang}, \bibinfo{person}{Li Yi}, \bibinfo{person}{Thomas~A. Funkhouser},
  \bibinfo{person}{Matthias Nie{\ss}ner}, {and} \bibinfo{person}{Leonidas~J.
  Guibas}.} \bibinfo{year}{2019}\natexlab{}.
\newblock \showarticletitle{TextureNet: {C}onsistent Local Parametrizations for
  Learning From High-Resolution Signals on Meshes}. In
  \bibinfo{booktitle}{\emph{{CVPR}}}. \bibinfo{publisher}{{IEEE}},
  \bibinfo{pages}{4440--4449}.
\newblock


\bibitem[\protect\citeauthoryear{Kingma and Ba}{Kingma and Ba}{2015}]%
        {Kingma2014AdamAM}
\bibfield{author}{\bibinfo{person}{Diederik~P. Kingma} {and}
  \bibinfo{person}{Jimmy Ba}.} \bibinfo{year}{2015}\natexlab{}.
\newblock \showarticletitle{Adam: {A} Method for Stochastic Optimization}. In
  \bibinfo{booktitle}{\emph{{ICLR}}}.
\newblock


\bibitem[\protect\citeauthoryear{Kipf and Welling}{Kipf and Welling}{2017}]%
        {kipf2016semi}
\bibfield{author}{\bibinfo{person}{Thomas~N. Kipf} {and} \bibinfo{person}{Max
  Welling}.} \bibinfo{year}{2017}\natexlab{}.
\newblock \showarticletitle{Semi-Supervised Classification with Graph
  Convolutional Networks}. In \bibinfo{booktitle}{\emph{ICLR}}.
\newblock


\bibitem[\protect\citeauthoryear{Kondor, Lin, and Trivedi}{Kondor
  et~al\mbox{.}}{2018}]%
        {Kondor2018}
\bibfield{author}{\bibinfo{person}{Risi Kondor}, \bibinfo{person}{Zhen Lin},
  {and} \bibinfo{person}{Shubhendu Trivedi}.} \bibinfo{year}{2018}\natexlab{}.
\newblock \showarticletitle{Clebsch-Gordan Nets: a Fully Fourier Space
  Spherical Convolutional Neural Network}. In \bibinfo{booktitle}{\emph{NeurIPS
  2018}}. \bibinfo{pages}{10138--10147}.
\newblock


\bibitem[\protect\citeauthoryear{Kostrikov, Jiang, Panozzo, Zorin, and
  Bruna}{Kostrikov et~al\mbox{.}}{2018}]%
        {kostrikov2017surface}
\bibfield{author}{\bibinfo{person}{Ilya Kostrikov}, \bibinfo{person}{Zhongshi
  Jiang}, \bibinfo{person}{Daniele Panozzo}, \bibinfo{person}{Denis Zorin},
  {and} \bibinfo{person}{Joan Bruna}.} \bibinfo{year}{2018}\natexlab{}.
\newblock \showarticletitle{Surface networks}. In
  \bibinfo{booktitle}{\emph{{CVPR}}}. \bibinfo{publisher}{{IEEE}},
  \bibinfo{pages}{2540--2548}.
\newblock


\bibitem[\protect\citeauthoryear{Laptev, Savinov, Buhmann, and
  Pollefeys}{Laptev et~al\mbox{.}}{2016}]%
        {laptev2016ti}
\bibfield{author}{\bibinfo{person}{Dmitry Laptev}, \bibinfo{person}{Nikolay
  Savinov}, \bibinfo{person}{Joachim~M Buhmann}, {and} \bibinfo{person}{Marc
  Pollefeys}.} \bibinfo{year}{2016}\natexlab{}.
\newblock \showarticletitle{TI-POOLING: {T}ransformation-invariant pooling for
  feature learning in convolutional neural networks}. In
  \bibinfo{booktitle}{\emph{CVPR}}. \bibinfo{publisher}{IEEE},
  \bibinfo{pages}{289--297}.
\newblock


\bibitem[\protect\citeauthoryear{Larochelle, Erhan, Courville, Bergstra, and
  Bengio}{Larochelle et~al\mbox{.}}{2007}]%
        {larochelle2007empirical}
\bibfield{author}{\bibinfo{person}{Hugo Larochelle}, \bibinfo{person}{Dumitru
  Erhan}, \bibinfo{person}{Aaron Courville}, \bibinfo{person}{James Bergstra},
  {and} \bibinfo{person}{Yoshua Bengio}.} \bibinfo{year}{2007}\natexlab{}.
\newblock \showarticletitle{An empirical evaluation of deep architectures on
  problems with many factors of variation}. In
  \bibinfo{booktitle}{\emph{ICML}}. ACM, \bibinfo{pages}{473--480}.
\newblock


\bibitem[\protect\citeauthoryear{Li, Bu, Sun, Wu, Di, and Chen}{Li
  et~al\mbox{.}}{2018}]%
        {Li2018}
\bibfield{author}{\bibinfo{person}{Yangyan Li}, \bibinfo{person}{Rui Bu},
  \bibinfo{person}{Mingchao Sun}, \bibinfo{person}{Wei Wu},
  \bibinfo{person}{Xinhan Di}, {and} \bibinfo{person}{Baoquan Chen}.}
  \bibinfo{year}{2018}\natexlab{}.
\newblock \showarticletitle{PointCNN: Convolution On X-Transformed Points}.
\newblock In \bibinfo{booktitle}{\emph{NeurIPS}}. \bibinfo{pages}{820--830}.
\newblock


\bibitem[\protect\citeauthoryear{Lian, Godil, Bustos, Daoudi, Hermans,
  Kawamura, Kurita, Lavoué, Nguyen, Ohbuchi, Ohkita, Ohishi, Porikli, Reuter,
  Sipiran, Smeets, Suetens, Tabia, and Vandermeulen}{Lian
  et~al\mbox{.}}{2011}]%
        {lian2011}
\bibfield{author}{\bibinfo{person}{Zhouhui Lian}, \bibinfo{person}{Afzal
  Godil}, \bibinfo{person}{Benjamin Bustos}, \bibinfo{person}{Mohamed Daoudi},
  \bibinfo{person}{Jeroen Hermans}, \bibinfo{person}{Shun Kawamura},
  \bibinfo{person}{Yukinori Kurita}, \bibinfo{person}{Guillaume Lavoué},
  \bibinfo{person}{Hien Nguyen}, \bibinfo{person}{Ryutarou Ohbuchi},
  \bibinfo{person}{Yuki Ohkita}, \bibinfo{person}{Yuya Ohishi},
  \bibinfo{person}{Fatih Porikli}, \bibinfo{person}{Martin Reuter},
  \bibinfo{person}{Ivan Sipiran}, \bibinfo{person}{Dirk Smeets},
  \bibinfo{person}{Paul Suetens}, \bibinfo{person}{Hedi Tabia}, {and}
  \bibinfo{person}{Dirk Vandermeulen}.} \bibinfo{year}{2011}\natexlab{}.
\newblock \showarticletitle{SHREC '11 Track: Shape Retrieval on Non-rigid 3D
  Watertight Meshes.}
\newblock \bibinfo{journal}{\emph{Eurographics Workshop on 3D Object
  Retrieval}}, \bibinfo{pages}{79--88}.
\newblock


\bibitem[\protect\citeauthoryear{Litany, Bronstein, Bronstein, and
  Makadia}{Litany et~al\mbox{.}}{2018}]%
        {litany2017deformable}
\bibfield{author}{\bibinfo{person}{Or Litany}, \bibinfo{person}{Alexander~M.
  Bronstein}, \bibinfo{person}{Michael~M. Bronstein}, {and}
  \bibinfo{person}{Ameesh Makadia}.} \bibinfo{year}{2018}\natexlab{}.
\newblock \showarticletitle{Deformable Shape Completion with Graph
  Convolutional Autoencoders}. In \bibinfo{booktitle}{\emph{{CVPR}}}.
  \bibinfo{publisher}{{IEEE}}, \bibinfo{pages}{1886--1895}.
\newblock


\bibitem[\protect\citeauthoryear{Liu, Wang, Driever, and Ronneberger}{Liu
  et~al\mbox{.}}{2012}]%
        {liu20122d}
\bibfield{author}{\bibinfo{person}{Kun Liu}, \bibinfo{person}{Qing Wang},
  \bibinfo{person}{Wolfgang Driever}, {and} \bibinfo{person}{Olaf
  Ronneberger}.} \bibinfo{year}{2012}\natexlab{}.
\newblock \showarticletitle{2d/3d rotation-invariant detection using
  equivariant filters and kernel weighted mapping}. In
  \bibinfo{booktitle}{\emph{{CVPR}}}. IEEE, \bibinfo{pages}{917--924}.
\newblock


\bibitem[\protect\citeauthoryear{Marcos, Volpi, and Tuia}{Marcos
  et~al\mbox{.}}{2016}]%
        {marcos2016learning}
\bibfield{author}{\bibinfo{person}{Diego Marcos}, \bibinfo{person}{Michele
  Volpi}, {and} \bibinfo{person}{Devis Tuia}.} \bibinfo{year}{2016}\natexlab{}.
\newblock \showarticletitle{Learning rotation invariant convolutional filters
  for texture classification}. In \bibinfo{booktitle}{\emph{{ICPR}}}. IEEE,
  \bibinfo{pages}{2012--2017}.
\newblock


\bibitem[\protect\citeauthoryear{Maron, Galun, Aigerman, Trope, Dym, Yumer,
  Kim, and Lipman}{Maron et~al\mbox{.}}{2017}]%
        {maron2017convolutional}
\bibfield{author}{\bibinfo{person}{Haggai Maron}, \bibinfo{person}{Meirav
  Galun}, \bibinfo{person}{Noam Aigerman}, \bibinfo{person}{Miri Trope},
  \bibinfo{person}{Nadav Dym}, \bibinfo{person}{Ersin Yumer},
  \bibinfo{person}{Vladimir~G Kim}, {and} \bibinfo{person}{Yaron Lipman}.}
  \bibinfo{year}{2017}\natexlab{}.
\newblock \showarticletitle{Convolutional neural networks on surfaces via
  seamless toric covers}.
\newblock \bibinfo{journal}{\emph{ACM Trans. Graph}} \bibinfo{volume}{36},
  \bibinfo{number}{4} (\bibinfo{year}{2017}), \bibinfo{pages}{71}.
\newblock


\bibitem[\protect\citeauthoryear{Masci, Boscaini, Bronstein, and
  Vandergheynst}{Masci et~al\mbox{.}}{2015}]%
        {masci2015geodesic}
\bibfield{author}{\bibinfo{person}{Jonathan Masci}, \bibinfo{person}{Davide
  Boscaini}, \bibinfo{person}{Michael Bronstein}, {and} \bibinfo{person}{Pierre
  Vandergheynst}.} \bibinfo{year}{2015}\natexlab{}.
\newblock \showarticletitle{Geodesic convolutional neural networks on
  riemannian manifolds}. In \bibinfo{booktitle}{\emph{ICCV}}.
  \bibinfo{pages}{37--45}.
\newblock


\bibitem[\protect\citeauthoryear{Melv{\ae}r and Reimers}{Melv{\ae}r and
  Reimers}{2012}]%
        {melvaer2012geodesic}
\bibfield{author}{\bibinfo{person}{Eivind~Lyche Melv{\ae}r} {and}
  \bibinfo{person}{Martin Reimers}.} \bibinfo{year}{2012}\natexlab{}.
\newblock \showarticletitle{Geodesic polar coordinates on polygonal meshes}. In
  \bibinfo{booktitle}{\emph{Comput. Graph. Forum}}, Vol.~\bibinfo{volume}{31}.
  \bibinfo{pages}{2423--2435}.
\newblock


\bibitem[\protect\citeauthoryear{Monti, Boscaini, Masci, Rodola, Svoboda, and
  Bronstein}{Monti et~al\mbox{.}}{2017}]%
        {monti2017geometric}
\bibfield{author}{\bibinfo{person}{Federico Monti}, \bibinfo{person}{Davide
  Boscaini}, \bibinfo{person}{Jonathan Masci}, \bibinfo{person}{Emanuele
  Rodola}, \bibinfo{person}{Jan Svoboda}, {and} \bibinfo{person}{Michael~M
  Bronstein}.} \bibinfo{year}{2017}\natexlab{}.
\newblock \showarticletitle{Geometric deep learning on graphs and manifolds
  using mixture model CNNs}. In \bibinfo{booktitle}{\emph{CVPR}},
  Vol.~\bibinfo{volume}{1}. \bibinfo{publisher}{IEEE}, \bibinfo{pages}{3}.
\newblock


\bibitem[\protect\citeauthoryear{Pan, Liu, Liu, and Tong}{Pan
  et~al\mbox{.}}{2018}]%
        {Pan18}
\bibfield{author}{\bibinfo{person}{Hao Pan}, \bibinfo{person}{Shilin Liu},
  \bibinfo{person}{Yang Liu}, {and} \bibinfo{person}{Xin Tong}.}
  \bibinfo{year}{2018}\natexlab{}.
\newblock \showarticletitle{Convolutional Neural Networks on 3D Surfaces Using
  Parallel Frames}.
\newblock \bibinfo{journal}{\emph{arXiv: 1808.04952}} (\bibinfo{year}{2018}).
\newblock


\bibitem[\protect\citeauthoryear{Poulenard and Ovsjanikov}{Poulenard and
  Ovsjanikov}{2018}]%
        {poulenard2018multi}
\bibfield{author}{\bibinfo{person}{Adrien Poulenard} {and}
  \bibinfo{person}{Maks Ovsjanikov}.} \bibinfo{year}{2018}\natexlab{}.
\newblock \showarticletitle{Multi-directional geodesic neural networks via
  equivariant convolution}.
\newblock \bibinfo{journal}{\emph{{ACM} Trans. Graph.}} \bibinfo{volume}{37},
  \bibinfo{number}{6} (\bibinfo{year}{2018}), \bibinfo{pages}{236:1--236:14}.
\newblock


\bibitem[\protect\citeauthoryear{Poulenard, Rakotosaona, Ponty, and
  Ovsjanikov}{Poulenard et~al\mbox{.}}{2019}]%
        {Poulenard2O19}
\bibfield{author}{\bibinfo{person}{Adrien Poulenard},
  \bibinfo{person}{Marie{-}Julie Rakotosaona}, \bibinfo{person}{Yann Ponty},
  {and} \bibinfo{person}{Maks Ovsjanikov}.} \bibinfo{year}{2019}\natexlab{}.
\newblock \showarticletitle{Effective Rotation-Invariant Point {CNN} with
  Spherical Harmonics Kernels}. In \bibinfo{booktitle}{\emph{{3DV}}}.
  \bibinfo{pages}{47--56}.
\newblock


\bibitem[\protect\citeauthoryear{Qi, Su, Mo, and Guibas}{Qi
  et~al\mbox{.}}{2017a}]%
        {qi2017pointnet}
\bibfield{author}{\bibinfo{person}{Charles~Ruizhongtai Qi},
  \bibinfo{person}{Hao Su}, \bibinfo{person}{Kaichun Mo}, {and}
  \bibinfo{person}{Leonidas~J. Guibas}.} \bibinfo{year}{2017}\natexlab{a}.
\newblock \showarticletitle{PointNet: Deep Learning on Point Sets for 3D
  Classification and Segmentation}. In \bibinfo{booktitle}{\emph{{CVPR}}}.
  \bibinfo{publisher}{{IEEE}}, \bibinfo{pages}{77--85}.
\newblock


\bibitem[\protect\citeauthoryear{Qi, Yi, Su, and Guibas}{Qi
  et~al\mbox{.}}{2017b}]%
        {qi2017pointnet++}
\bibfield{author}{\bibinfo{person}{Charles~Ruizhongtai Qi}, \bibinfo{person}{Li
  Yi}, \bibinfo{person}{Hao Su}, {and} \bibinfo{person}{Leonidas~J Guibas}.}
  \bibinfo{year}{2017}\natexlab{b}.
\newblock \showarticletitle{Pointnet++: Deep hierarchical feature learning on
  point sets in a metric space}. In \bibinfo{booktitle}{\emph{{NeurIPS}}}.
  \bibinfo{pages}{5099--5108}.
\newblock


\bibitem[\protect\citeauthoryear{Ranjan, Bolkart, Sanyal, and Black}{Ranjan
  et~al\mbox{.}}{2018}]%
        {ranjan2018generating}
\bibfield{author}{\bibinfo{person}{Anurag Ranjan}, \bibinfo{person}{Timo
  Bolkart}, \bibinfo{person}{Soubhik Sanyal}, {and} \bibinfo{person}{Michael~J.
  Black}.} \bibinfo{year}{2018}\natexlab{}.
\newblock \showarticletitle{Generating 3D faces using Convolutional Mesh
  Autoencoders}. In \bibinfo{booktitle}{\emph{{ECCV}}}.
  \bibinfo{pages}{725--741}.
\newblock


\bibitem[\protect\citeauthoryear{Schmidt, Grimm, and Wyvill}{Schmidt
  et~al\mbox{.}}{2006}]%
        {schmidt2006interactive}
\bibfield{author}{\bibinfo{person}{Ryan Schmidt}, \bibinfo{person}{Cindy
  Grimm}, {and} \bibinfo{person}{Brian Wyvill}.}
  \bibinfo{year}{2006}\natexlab{}.
\newblock \showarticletitle{Interactive decal compositing with discrete
  exponential maps}. In \bibinfo{booktitle}{\emph{ACM Trans. Graph.}},
  Vol.~\bibinfo{volume}{25}. \bibinfo{pages}{605--613}.
\newblock


\bibitem[\protect\citeauthoryear{Schmidt and Singh}{Schmidt and Singh}{2010}]%
        {schmidt2010meshmixer}
\bibfield{author}{\bibinfo{person}{Ryan Schmidt} {and} \bibinfo{person}{Karan
  Singh}.} \bibinfo{year}{2010}\natexlab{}.
\newblock \showarticletitle{Meshmixer: an interface for rapid mesh
  composition}. In \bibinfo{booktitle}{\emph{{ACM SIGGRAPH} 2010 Talks}}.
\newblock


\bibitem[\protect\citeauthoryear{Schonsheck, Dong, and Lai}{Schonsheck
  et~al\mbox{.}}{2018}]%
        {schonsheck2018parallel}
\bibfield{author}{\bibinfo{person}{Stefan~C Schonsheck}, \bibinfo{person}{Bin
  Dong}, {and} \bibinfo{person}{Rongjie Lai}.} \bibinfo{year}{2018}\natexlab{}.
\newblock \showarticletitle{Parallel Transport Convolution: A New Tool for
  Convolutional Neural Networks on Manifolds}.
\newblock \bibinfo{journal}{\emph{arXiv:1805.07857}} (\bibinfo{year}{2018}).
\newblock


\bibitem[\protect\citeauthoryear{Sharp}{Sharp}{2019}]%
        {polyscope}
\bibfield{author}{\bibinfo{person}{Nicholas Sharp}.}
  \bibinfo{year}{2019}\natexlab{}.
\newblock \bibinfo{title}{Polyscope}.
\newblock
\newblock
\newblock
\shownote{www.polyscope.run.}


\bibitem[\protect\citeauthoryear{Sharp, Crane, et~al\mbox{.}}{Sharp
  et~al\mbox{.}}{2019a}]%
        {geometrycentral}
\bibfield{author}{\bibinfo{person}{Nicholas Sharp}, \bibinfo{person}{Keenan
  Crane}, {et~al\mbox{.}}} \bibinfo{year}{2019}\natexlab{a}.
\newblock \bibinfo{title}{geometry-central}.
\newblock
\newblock
\newblock
\shownote{www.geometry-central.net.}


\bibitem[\protect\citeauthoryear{Sharp, Soliman, and Crane}{Sharp
  et~al\mbox{.}}{2019b}]%
        {Sharp:2019:VHM}
\bibfield{author}{\bibinfo{person}{Nicholas Sharp}, \bibinfo{person}{Yousuf
  Soliman}, {and} \bibinfo{person}{Keenan Crane}.}
  \bibinfo{year}{2019}\natexlab{b}.
\newblock \showarticletitle{The Vector Heat Method}.
\newblock \bibinfo{journal}{\emph{ACM Trans. Graph.}} \bibinfo{volume}{38},
  \bibinfo{number}{3} (\bibinfo{year}{2019}).
\newblock


\bibitem[\protect\citeauthoryear{Sinha, Bai, and Ramani}{Sinha
  et~al\mbox{.}}{2016}]%
        {10.1007/978-3-319-46466-4_14}
\bibfield{author}{\bibinfo{person}{Ayan Sinha}, \bibinfo{person}{Jing Bai},
  {and} \bibinfo{person}{Karthik Ramani}.} \bibinfo{year}{2016}\natexlab{}.
\newblock \showarticletitle{Deep Learning 3D Shape Surfaces Using Geometry
  Images}. In \bibinfo{booktitle}{\emph{{ECCV}}}. \bibinfo{pages}{223--240}.
\newblock


\bibitem[\protect\citeauthoryear{Su, Maji, Kalogerakis, and
  Learned{-}Miller}{Su et~al\mbox{.}}{2015}]%
        {Su2015}
\bibfield{author}{\bibinfo{person}{Hang Su}, \bibinfo{person}{Subhransu Maji},
  \bibinfo{person}{Evangelos Kalogerakis}, {and} \bibinfo{person}{Erik~G.
  Learned{-}Miller}.} \bibinfo{year}{2015}\natexlab{}.
\newblock \showarticletitle{Multi-view Convolutional Neural Networks for 3D
  Shape Recognition}. In \bibinfo{booktitle}{\emph{{ICCV}}}.
  \bibinfo{publisher}{{IEEE}}, \bibinfo{pages}{945--953}.
\newblock


\bibitem[\protect\citeauthoryear{Sun, Lu, and Baek}{Sun et~al\mbox{.}}{2018}]%
        {Sun18}
\bibfield{author}{\bibinfo{person}{Zhiyu Sun}, \bibinfo{person}{Jia Lu}, {and}
  \bibinfo{person}{Stephen Baek}.} \bibinfo{year}{2018}\natexlab{}.
\newblock \showarticletitle{ZerNet: Convolutional Neural Networks on Arbitrary
  Surfaces via Zernike Local Tangent Space Estimation}.
\newblock \bibinfo{journal}{\emph{arXiv:1812.01082}} (\bibinfo{year}{2018}).
\newblock


\bibitem[\protect\citeauthoryear{Thomas, Smidt, Kearnes, Yang, Li, Kohlhoff,
  and Riley}{Thomas et~al\mbox{.}}{2018}]%
        {thomas2018tensor}
\bibfield{author}{\bibinfo{person}{Nathaniel Thomas}, \bibinfo{person}{Tess
  Smidt}, \bibinfo{person}{Steven~M. Kearnes}, \bibinfo{person}{Lusann Yang},
  \bibinfo{person}{Li Li}, \bibinfo{person}{Kai Kohlhoff}, {and}
  \bibinfo{person}{Patrick Riley}.} \bibinfo{year}{2018}\natexlab{}.
\newblock \showarticletitle{Tensor Field Networks: Rotation- and
  Translation-Equivariant Neural Networks for 3D Point Clouds}.
\newblock \bibinfo{journal}{\emph{arXiv:1802.08219}} (\bibinfo{year}{2018}).
\newblock


\bibitem[\protect\citeauthoryear{Verma, Boyer, and Verbeek}{Verma
  et~al\mbox{.}}{2018}]%
        {verma2018feastnet}
\bibfield{author}{\bibinfo{person}{Nitika Verma}, \bibinfo{person}{Edmond
  Boyer}, {and} \bibinfo{person}{Jakob Verbeek}.}
  \bibinfo{year}{2018}\natexlab{}.
\newblock \showarticletitle{FeaStNet: Feature-Steered Graph Convolutions for 3D
  Shape Analysis}. In \bibinfo{booktitle}{\emph{{CVPR}}}.
  \bibinfo{publisher}{{IEEE}}, \bibinfo{pages}{2598--2606}.
\newblock


\bibitem[\protect\citeauthoryear{Vlasic, Baran, Matusik, and Popovic}{Vlasic
  et~al\mbox{.}}{2008}]%
        {10.1145/1360612.1360696}
\bibfield{author}{\bibinfo{person}{Daniel Vlasic}, \bibinfo{person}{Ilya
  Baran}, \bibinfo{person}{Wojciech Matusik}, {and} \bibinfo{person}{Jovan
  Popovic}.} \bibinfo{year}{2008}\natexlab{}.
\newblock \showarticletitle{Articulated Mesh Animation from Multi-View
  Silhouettes}.
\newblock \bibinfo{journal}{\emph{ACM Trans. Graph.}} \bibinfo{volume}{27},
  \bibinfo{number}{3} (\bibinfo{year}{2008}).
\newblock


\bibitem[\protect\citeauthoryear{Wang, Sun, Liu, Sarma, Bronstein, and
  Solomon}{Wang et~al\mbox{.}}{2019}]%
        {wang2018dynamic}
\bibfield{author}{\bibinfo{person}{Yue Wang}, \bibinfo{person}{Yongbin Sun},
  \bibinfo{person}{Ziwei Liu}, \bibinfo{person}{Sanjay~E. Sarma},
  \bibinfo{person}{Michael~M. Bronstein}, {and} \bibinfo{person}{Justin~M.
  Solomon}.} \bibinfo{year}{2019}\natexlab{}.
\newblock \showarticletitle{Dynamic Graph {CNN} for Learning on Point Clouds}.
\newblock \bibinfo{journal}{\emph{{ACM} Trans. Graph.}} \bibinfo{volume}{38},
  \bibinfo{number}{5} (\bibinfo{year}{2019}), \bibinfo{pages}{146:1--146:12}.
\newblock


\bibitem[\protect\citeauthoryear{Wardetzky, Bergou, Harmon, Zorin, and
  Grinspun}{Wardetzky et~al\mbox{.}}{2007}]%
        {Wardetzky2007}
\bibfield{author}{\bibinfo{person}{Max Wardetzky}, \bibinfo{person}{Mikl\'{o}s
  Bergou}, \bibinfo{person}{David Harmon}, \bibinfo{person}{Denis Zorin}, {and}
  \bibinfo{person}{Eitan Grinspun}.} \bibinfo{year}{2007}\natexlab{}.
\newblock \showarticletitle{Discrete quadratic curvature energies}.
\newblock \bibinfo{journal}{\emph{Computer Aided Geometric Design}}
  \bibinfo{volume}{24}, \bibinfo{number}{8-9} (\bibinfo{year}{2007}),
  \bibinfo{pages}{499--518}.
\newblock


\bibitem[\protect\citeauthoryear{Weiler and Cesa}{Weiler and Cesa}{2019}]%
        {DBLP:conf/nips/WeilerC19}
\bibfield{author}{\bibinfo{person}{Maurice Weiler} {and}
  \bibinfo{person}{Gabriele Cesa}.} \bibinfo{year}{2019}\natexlab{}.
\newblock \showarticletitle{General E(2)-Equivariant Steerable CNNs}. In
  \bibinfo{booktitle}{\emph{NeurIPS}}. \bibinfo{pages}{14334--14345}.
\newblock
\urldef\tempurl%
\url{http://papers.nips.cc/paper/9580-general-e2-equivariant-steerable-cnns}
\showURL{%
\tempurl}


\bibitem[\protect\citeauthoryear{Worrall, Garbin, Turmukhambetov, and
  Brostow}{Worrall et~al\mbox{.}}{2017}]%
        {worrall2017harmonic}
\bibfield{author}{\bibinfo{person}{Daniel~E Worrall},
  \bibinfo{person}{Stephan~J Garbin}, \bibinfo{person}{Daniyar Turmukhambetov},
  {and} \bibinfo{person}{Gabriel~J Brostow}.} \bibinfo{year}{2017}\natexlab{}.
\newblock \showarticletitle{Harmonic networks: Deep translation and rotation
  equivariance}. In \bibinfo{booktitle}{\emph{{CVPR}}}.
  \bibinfo{publisher}{{IEEE}}, \bibinfo{pages}{5028--5037}.
\newblock


\bibitem[\protect\citeauthoryear{Wu, Pan, Chen, Long, Zhang, and Yu}{Wu
  et~al\mbox{.}}{2019}]%
        {wu2019comprehensive}
\bibfield{author}{\bibinfo{person}{Zonghan Wu}, \bibinfo{person}{Shirui Pan},
  \bibinfo{person}{Fengwen Chen}, \bibinfo{person}{Guodong Long},
  \bibinfo{person}{Chengqi Zhang}, {and} \bibinfo{person}{Philip~S. Yu}.}
  \bibinfo{year}{2019}\natexlab{}.
\newblock \showarticletitle{A Comprehensive Survey on Graph Neural Networks}.
\newblock \bibinfo{journal}{\emph{arXiv:1901.00596}} (\bibinfo{year}{2019}).
\newblock


\bibitem[\protect\citeauthoryear{Wu, Song, Khosla, Yu, Zhang, Tang, and
  Xiao}{Wu et~al\mbox{.}}{2015}]%
        {wu20153d}
\bibfield{author}{\bibinfo{person}{Zhirong Wu}, \bibinfo{person}{Shuran Song},
  \bibinfo{person}{Aditya Khosla}, \bibinfo{person}{Fisher Yu},
  \bibinfo{person}{Linguang Zhang}, \bibinfo{person}{Xiaoou Tang}, {and}
  \bibinfo{person}{Jianxiong Xiao}.} \bibinfo{year}{2015}\natexlab{}.
\newblock \showarticletitle{3d shapenets: A deep representation for volumetric
  shapes}. In \bibinfo{booktitle}{\emph{CVPR}}. \bibinfo{publisher}{IEEE},
  \bibinfo{pages}{1912--1920}.
\newblock


\bibitem[\protect\citeauthoryear{Zhou, Cui, Zhang, Yang, Liu, and Sun}{Zhou
  et~al\mbox{.}}{2018}]%
        {Zhou2018GraphNN}
\bibfield{author}{\bibinfo{person}{Jie Zhou}, \bibinfo{person}{Ganqu Cui},
  \bibinfo{person}{Zhengyan Zhang}, \bibinfo{person}{Cheng Yang},
  \bibinfo{person}{Zhiyuan Liu}, {and} \bibinfo{person}{Maosong Sun}.}
  \bibinfo{year}{2018}\natexlab{}.
\newblock \showarticletitle{Graph Neural Networks: A Review of Methods and
  Applications}.
\newblock \bibinfo{journal}{\emph{arXiv:abs/1812.08434}}
  (\bibinfo{year}{2018}).
\newblock


\end{thebibliography}

\end{document}